\newcommand\BibTeX{{\rmfamily B\kern-.05em \textsc{i\kern-.025em b}\kern-.08em
T\kern-.1667em\lower.7ex\hbox{E}\kern-.125emX}}
\newcolumntype{L}[1]{>{\raggedright\arraybackslash}m{#1}}
\newcolumntype{C}[1]{>{\centering\arraybackslash}m{#1}}
\newcolumntype{R}[1]{>{\raggedleft\arraybackslash}m{#1}}
\newcommand\norm[1]{\left\lVert#1\right\rVert}
\DeclarePairedDelimiter{\nint}\lfloor\rceil
\DeclareMathAlphabet{\mathbbmsl}{U}{bbm}{m}{sl}
\newtheorem{prop}{Proposition}
\begin{document}
\renewcommand{\baselinestretch}{0.95}

\title{Can robots mold soft plastic materials\\by shaping depth images?}
	
\author{Ege Gursoy, Sonny Tarbouriech, Andrea Cherubini%
\thanks{All authors are with LIRMM, Univ. Montpellier, CNRS, Montpellier, France. {\tt\small firstname.lastname@lirmm.fr}}
}

\bstctlcite{IEEEexample:BSTcontrol}

\markboth{}%
{Cherubini \MakeLowercase{\textit{et al.}}: Can robots mold soft plastic materials by shaping depth images?}

\maketitle

\begin{abstract}

Can robots mold soft plastic materials by shaping depth images? The short answer is no: current day robots can't. In this article, we address the problem of shaping plastic material with an anthropomorphic arm/hand robot, which observes the material with a fixed depth camera. Robots capable of molding could assist humans in many tasks, such as cooking, scooping or gardening. Yet, the problem is complex, due to its high-dimensionality at both perception and control levels. To address it, we design three alternative data-based methods for predicting the effect of robot actions on the material. Then, the robot can plan the sequence of actions and their positions, to mold the material into a desired shape. To make the prediction problem tractable, we rely on two original ideas. First, we prove that under reasonable assumptions, the shaping problem can be mapped from point cloud to depth image space, with many benefits (simpler processing, no need for registration, lower computation time and memory requirements). Second, we design a novel, simple metric for quickly measuring the distance between two depth images. The metric is based on the inherent point cloud representation of depth images, which enables direct and consistent comparison of image pairs through a non-uniform scaling approach, and therefore opens promising perspectives for designing \textit{depth image -- based} robot controllers. We assess our approach in a series of unprecedented experiments, where a robotic arm/hand molds flour from initial to final shapes, either with its own dataset, or by transfer learning from a human dataset. We conclude the article by discussing the limitations of our framework and those of current day hardware, which make human-like robot molding a challenging open research problem.

\end{abstract}

\begin{IEEEkeywords}
	 Robotic Manipulation, Vision-Based Control, RGB-D Image Processing, Machine Learning.
\end{IEEEkeywords}

\IEEEpeerreviewmaketitle

\section{INTRODUCTION}


\IEEEPARstart{T}{hroughout} evolution, humans have assimilated amazing manipulation skills. An example is sculpting, which has progressed from prehistoric cave handprints to Renaissance masterpieces (see Fig.~\ref{Fig:sculpture}\footnote{Figure sources:  \url{https://en.wikipedia.org/wiki/File:GargasFlutings.jpg} and \url{https://www.michelangelo.org/images/artworks/moses.jpg}}). Achieving these results requires many capabilities. First, the sculptor must conceive a rough representation of the intermediate shapes, leading from initial to desired state of the molding material. Then, s/he must choose and use the most appropriate actions and tools for obtaining each of these intermediate shapes. While sculpting, s/he relies on perception (mostly visual and tactile) of the material properties and shape, to continuously update the process. Similar, albeit simpler, skills are needed for everyday tasks, such as gardening, cooking, massaging or manipulating cloak. 

%

Recently,~\cite{Duenser20,MA2021150} have presented impressive robotic sculpting, achieved with a dual arm manipulator cutting clay via a hot wire. Other pioneer works on robotic sculpture include~\cite{sculptNURB:07,Pagliarini2009}. While their approaches are very original, the authors of these works focus on a specific task (sculpting), and rely on tools (e.g., hot wire) customized for their application and matter (e.g., clay). In the authors' view, molding in human-like manner soft materials would enable robots to perform a plethora of new operations and assist humans in many daily chores, well beyond robotic sculpture~\cite{Duenser20}-\cite{Pagliarini2009}. However, to date, roboticists have mainly focused on rigid object manipulation. The reason is that manipulating soft materials requires perceiving and modeling deformations at a high frame rate. While visual features can be consistently detected and tracked on rigid objects, they change over time on deformable materials, misleading both model-based and feature-based visual trackers. Tactile and force feedback could also be beneficial. These senses could complement the 3D geometry measured by vision, to infer the material's mechanical properties. Yet, complicated contact models are necessary to map force/tactile signals to the corresponding object surface displacements. Furthermore, the force/tactile measures should cover, with sufficiently high resolution, the whole robot skin. To the best of our knowledge, to date there exists no machine with such actuation and sensing capabilities.

\begin{figure}[t]
	\centering {\centering\includegraphics[width=\columnwidth]{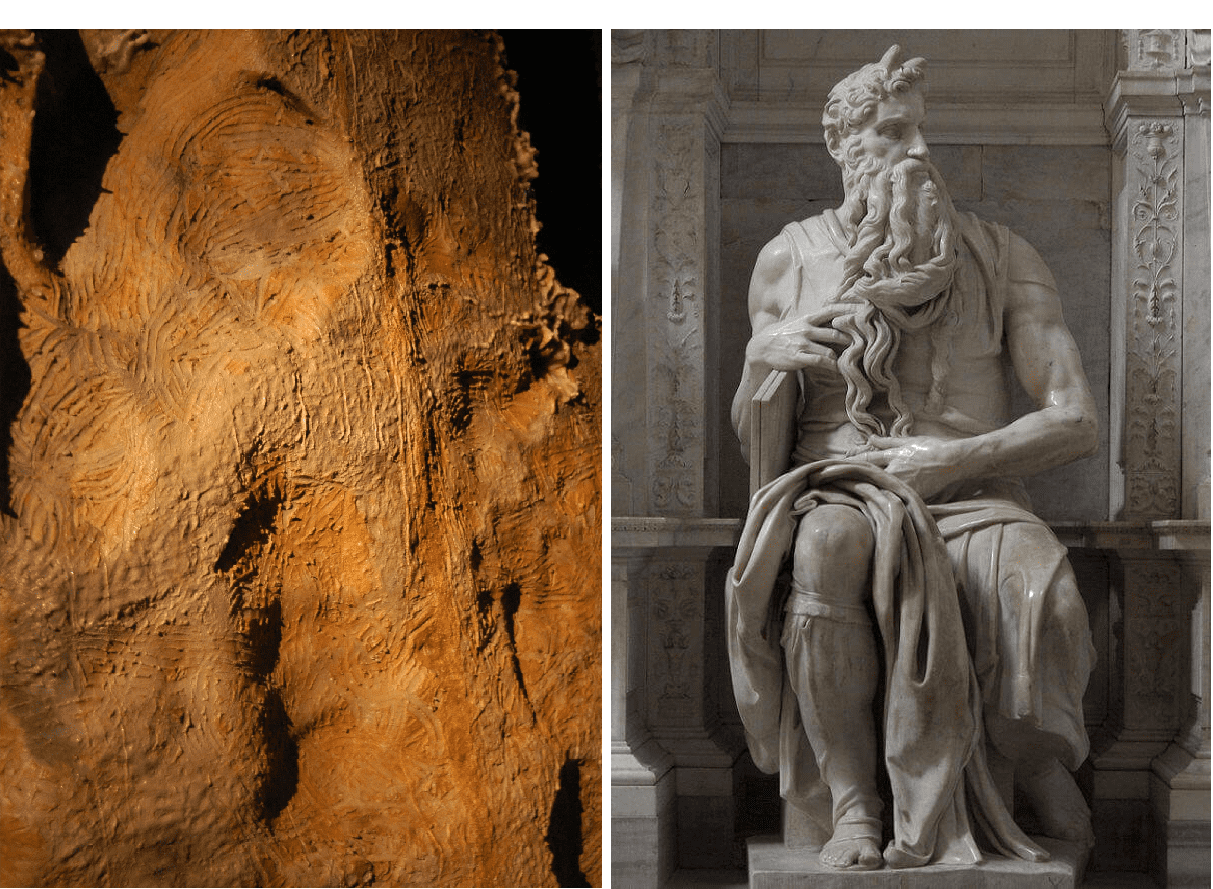}}
	\caption{It took over 20'000 years for human sculpture to evolve from the prehistoric cave finger flutings of Gargas in France (left) to Michelangelo's Moses (right).}
	\label{Fig:sculpture}
    \vspace{-0.7cm}
\end{figure}

In our previous work~\cite{cherubini:ijrr:2020}, we have addressed kinetic sand molding with 2D vision. Yet, without depth information, the set of controllable actions/effects is reduced. Typically, in~\cite{cherubini:ijrr:2020}, we could only modify the material's visible contours. That experience convinced us to opt, here, for depth vision, which -- compared to 2D vision -- enables shaping along all three directions. Besides, depth cameras, such as the Microsoft Kinect and the Intel RealSense, have recently gained a huge success in robotics.

Therefore, here, we explore the possibility of deforming soft plastic materials\footnote{Plastic materials remain deformed after the external force is removed.} by relying solely on depth images (i.e., with neither a model of the material, nor tactile feedback, nor registered point clouds). We propose the direct use of raw depth images instead of point clouds, to avoid cumbersome processing involving registration, and requiring high computational times and memory usage. Our rationale is similar to the one which motivated visual servoing directly in the image, rather than in the 3D space~\cite{chaumette:ram:2006}.

\subsection{Related Work}

Researchers have used point clouds to detect objects in indoor scenes~\cite{semanticLabel:13}, and to estimate their 6-DOF pose using 3D Geometric and Photometric feature descriptors~\cite{Hwang:12}. A related application is semantic object labelling, realized in~\cite{Engelmann:19} with deep learning. These techniques require rich datasets, such as the Yale-CMU-Berkeley real-life objects set~\cite{YCBdataset:17}, which is specifically designed for benchmarking manipulation research. Crowd-sourcing can be useful to build and label point cloud datasets, as shown in~\cite{sung2015robobarista}. Another application of point clouds is robot grasping, both with hands and suction cups. Pas and Platt~\cite{platt:18} propose to detect grasp points on novel objects presented in clutter, while accounting for the robot hand geometry. The authors of~\cite{dexnet:18} have introduced Dex-Net 3.0, a dataset of 2.8 million point clouds, suction grasps, and
grasp robustness labels, which computes the quality of the seal between suction cup and local object surface. Point clouds are studied well beyond robotics, with applications including plant phenotyping~\cite{phenotype:19}.

Despite this success, point cloud dimensions make processing cumbersome: even most current day implementations of the Iterative Closest Point (ICP) algorithm~\cite{icp:94} for aligning two point clouds are too slow for robot feedback control. ICP has been originally designed and applied to rigid scenes~\cite{Fioraio2011RealtimeVA}, with improved versions, as in~\cite{Liu2018EfficientGP}, where translation and rotation are decoupled, and a fast Bound and Branch algorithm globally optimizes the 3D translation parameter first. Recently,~\cite{carlone:21} has proposed a fast (milliseconds) and certifiable algorithm for the registration of two point clouds in the presence of numerous outlier correspondences. The authors of~\cite{Ma:19} have extended point registration to non-rigid clouds, via a robust transformation learning scheme. The principle is to iteratively establish point correspondences and learn the nonrigid transformation between two given sets of points. To this end, they cast the point set registration into a semi-supervised learning problem.

At a higher level, many efforts have been put into non-rigid object tracking. DynamicFusion~\cite{newcombe:15} is a dense simultaneous localization and mapping framework capable of reconstructing non-rigidly deforming scenes in real-time, by
fusing RGB-D images. Since it does not require a template or other prior scene model, the approach is applicable to a wide range of moving objects and scenes.
Other model-free approaches are: ~\cite{khalil2010visual}, which tracks -- using touch and vision -- the surface of a deformable object to be grasped by a robot, and~\cite{staffa2015segmentation}, where a neural network can visually segment a pizza dough. An elegant alternative~\cite{abbeel:13} consists in including -- if they are known -- the physical properties of the object, to track it from a sequence of point clouds, with a probabilistic generative model.

When it comes down to shaping the object, the robot should be capable of predicting the next state given the current state and a proposed action. To this end,~\cite{elliott2018robotic} use some defined primitive tool action for rearranging dirt and~\cite{schenck2017learning} presents a Convolutional Neural Network for scooping and dumping granular materials. The authors of~ \cite{li2018learnParticle} learn a particle-based simulator for complex control tasks; this simulator adapts to new environments or to unknown dynamics within few observations.

\subsection{Contributions}

In this paper, we consider a manipulator equipped with an anthropomorphic hand and capable of applying three different actions at various positions of some plastic material, while observing it through a fixed depth camera. The robot must mold the material into a desired shape, by finding and applying the correct sequence of actions at the right positions. To find the correct sequence, the robot should predict the effect of each action on the environment, which is represented by a depth image. We rely on some realistic hypotheses, to make the prediction problem tractable, so that it breaks down to replacing local rectangular patches within the depth image. We generate the patches consequent to action execution, with three alternative data-based methods, two with \textit{encoder-decoder image generator network}, and a simpler one with \textit{difference compensation}. We train all three prediction methods with a robot dataset and with a human dataset, and we design a novel scalar metric for assessing the methods' precision, in terms of distance between point clouds. We assess our method in a series of experiments, where a robot molds desired shapes, either with its own dataset, or by \textit{transfer learning} from the human dataset. Finally, we show that our method for finding the correct sequence of actions could generalize across actions, materials, and geometry.

The contributions of our paper are the following:
\begin{enumerate}
	\item 
	We propose an \textit{original framework for depth image -- based robot shaping of un-modeled plastic material}. Given an initial and a desired point cloud of the material, the robot plans and executes a sequence of different actions at various positions to mold the material. Due to its high dimensionality, the problem of predicting posterior (consequent to robot actions) point clouds, without a model of the material, would be intractable without an original idea: we show that, under reasonable assumptions, the shaping problem can be mapped from point cloud to depth image space, with the following benefits.
    \begin{itemize}
       \item
        Point cloud registration and matching are not required anymore,
	  \item 
        the lower needs in terms of computation time and memory usage make the action effect prediction and action planning sub-problems tractable,
        \item
        standard image processing techniques as well as deep learning can be applied directly to the depth images.
    \end{itemize}
    \item
     To attain depth image -- based shaping, we define a scalar metric for measuring the \textit{distance between two point clouds}, in depth image space. The proposed distance metric is based on the observation that depth images inherently encode a point cloud, allowing for the direct comparison of image pairs in a consistent manner. In our work, we use this metric: \textit{(a)} as the loss function of the generator which models changes produced by a robot action, \textit{(b)} as a measure for planning the sequence of actions leading from initial to final point clouds, and \textit{(c)} to assess the success of a point cloud shaping task. Since it is the weighted mean value of the difference between two 2D images, this metric can be computed very quickly, and with low memory needs, opening promising perspectives for designing depth image -- based robot controllers. 
	\item
	We assess the framework in a \textit{series of unprecedented experiments} with a robotic arm + hand, molding flour from initial to final shapes, by relying on both a robot and a human dataset. By exploiting transfer learning from the human, we do not need numerous robot experiments as in~\cite{schenck2017learning}. Furthermore, in contrast with similar works ~\cite{elliott2018robotic,schenck2017learning,li2018learnParticle}, by using a anthropomorphic robot, we push robotics one step closer to human-like molding. Yet, robotic molding remains an open problem, due to the many limitations of current day robots, emphasized in our experiments.  
\end{enumerate}


\section{Modeling Depth Images and Point Clouds}

In this section, we recall the fundamental notions of depth image theory, required in our work.
 
The \textit{luminance} of each pixel of a depth camera of resolution $w \times h$ and quantification $b$ bits is:
\begin{equation}
L_{uv} \in \mathcal{L} = \left\{0, 1, \dots, 2^b-1\right\} \subset \mathcal{N}
\end{equation}
with $\left(u,v\right) \in \mathcal{I} = \left\{1, \dots, w\right\} 
\times \left\{1, \dots, h\right\} \subset \mathcal{N}^2$ the pixel coordinates. The entire depth image is the matrix: 
\begin{equation}
\mathbf{L} = \left(
\begin{array}{ccc}
L_{11} & \cdots & L_{1w} \\
\vdots  & \ddots & \vdots \\
L_{h1} & \cdots & L_{hw}
\end{array}
\right) \in \mathcal{L}^{w \times h}. 
\end{equation}
In the following, we refer to \textit{depth images} simply as \textit{images}.

We express point coordinates in meters in the \textit{camera frame} (center at the camera optical center, $X$ increasing with the pixel columns, $Y$ increasing with the pixel rows, $Z$ coincident with the optical axis, and positive for points in front of the camera).
Assuming a linear depth model, for any point $\mathbf{p} = \left(X Y Z\right)^\top$ seen in the depth image at $\left(u,v\right)$, luminance $L_{uv}$ can be converted to $Z$:
\begin{equation}
Z \left( L_{uv} \right) = \frac{Z_{min}-Z_{max}}{2^b-1} L_{uv} + Z_{max} \in \left[Z_{min}, Z_{max}\right].
\label{eq:depth}
\end{equation}
This requires knowing the camera depth range $\left[Z_{min}, Z_{max}\right] \subset \mathcal{R}^+$.
Note that this model clips at $Z_{max}$ (respectively,  $Z_{min}$) the depths of all points which have depth greater (smaller) than $Z_{max}$ ($Z_{min}$); the corresponding pixels are black (white), with $L_{uv}=0$ ($L_{uv}=2^b-1$).
We consider un-distorted perspective projection (i.e., a pinhole camera model). Then, the two other camera frame coordinates of $\mathbf{p}$ can be derived from $Z$ as:
\begin{equation}
X \left( u \right)  = \frac{u-u_0}{f_x} Z \text{  and  } Y \left( v \right) = \frac{v-v_0}{f_y}  Z
\label{eq:pinhole}
\end{equation}
with $\left(u_0,v_0\right) \in \left[1, w\right] 
\times \left[1, h\right] \subset \mathcal{R}^2$ the camera principal point coordinates in the image plane and $\left( f_x, f_y \right) \in \mathcal{R}^{2}$ the camera focal lengths expressed in pixels. All points visible in the depth image belong to the camera \textit{viewing frustum} $\mathcal{F} \subset \mathcal{R}^3$, which is a truncated pyramid lying between $Z_{min}$ and $Z_{max}$.

Let us define $\mathbf{m}$ the vector function which combines~(\ref{eq:depth}) and~(\ref{eq:pinhole}) to map a pixel and its luminance to the camera frame coordinates of the corresponding point:
\begin{equation}
\begin{array}{rccc}
\mathbf{m}:  & \mathcal{I} \times \mathcal{L}  & \rightarrow & \mathcal{F} \vspace{.2cm}\\
& \left( 
\begin{array}{c}
u\\
v\\
L_{uv} 
\end{array}
\right) & \mapsto & 
\left( 
\begin{array}{c}
X \left( u \right)\\
Y \left( v \right)\\
Z \left( L_{uv} \right)
\end{array}
\right)
\end{array}
\end{equation}
Note that  this mapping is \textit{bijective}: 
\begin{equation}
\begin{array}{rccc}
\mathbf{m}^{-1}:  & \mathcal{F}  & \rightarrow & \mathcal{I} \times \mathcal{L} \vspace{.2cm}\\
& \left( 
\begin{array}{c}
X\\
Y\\
Z 
\end{array}
\right) & \mapsto & 
\left( 
\begin{array}{c}
f_xX / Z + u_0\\
f_yY / Z + v_0\\
\left(2^b-1\right) \frac{Z - Z_{max}}{Z_{min} - Z_{max}}
\end{array}
\right).
\end{array}
\label{eq:PtoI}
\end{equation}
This is not the case of standard cameras, which cannot measure the point depth.

\textit{\underline{Definition: visible point cloud.}} Let us now represent the depth image as a set $\mathcal{D}$ of $w\times h$ triplets, containing the coordinates of each pixel of $\mathbf{L}$ along with its luminance:
\begin{equation}
\mathcal{D} = \left\{ \left(  u_i, v_i, L_{u_iv_i} \right)^\top, i = 1, \dots, w\times h \right\} \subset \mathcal{I} \times \mathcal{L}.
\end{equation}
The visible point cloud $\mathcal{P} = \left\{ \mathbf{p}_1, \dots \mathbf{p}_{w\times h}\right\} \subset \mathcal{F}$ corresponding to $\mathbf{L}$ is the \textit{image} (in mathematical terms) of $\mathcal{D}$ under $\mathbf{m}$:
\begin{equation}
\begin{array}{rccc}
\mathbf{m}:  & \mathcal{D}  & \rightarrow & \mathcal{P} 
\end{array}
\end{equation}
Note that $\dim \mathcal{P} = w \times h$. Note also that not all points of $\mathcal{P}$ exist physically. In particular, points corresponding to $L_{uv}=0$ and to $L_{uv}=2^b-1$ are artefacts due to the limited camera range. Furthermore, two points in $\mathcal{P}$ cannot be on the same optical ray (or else, one will occlude the other):
\begin{equation}
\forall 
\left( \mathbf{p}_i, \mathbf{p}_j \right) \in \mathcal{P}: \nexists 
k \in \mathcal{R}:  \mathbf{p}_i = k \mathbf{p}_j.
\end{equation}
In this paper, we will also use normalized pixel coordinates:
\begin{equation}
x = \frac{u-u_0}{f_x} \;\;\;\text{and}\;\;\; y = \frac{v-v_0}{f_y},
\label{eq:camCoord}
\end{equation} 
to rewrite equation~(\ref{eq:pinhole}) as:
\begin{equation}
X = x Z \;\;\; \text{and} \;\;\; Y = y Z.
\label{eq:pinhole2}
\end{equation}

%

\section{Problem statement}

\subsection{The point cloud shaping task}
\label{sect:shapePointCloud}

Consider a \textit{robot}, observing via a fixed depth camera the \textit{environment}. The robot can execute -- on the point cloud $\mathcal{P}$ representing the environment -- an action $\mathbf{a}$.

\begin{table*}[t!]
	\caption{Actions used in this work, with their characteristics.} 
	\label{table:Act}
	\centering
	\begin{tabular}{|C{1.2cm}|C{3.0cm}|C{3.9cm}|C{2.2cm}|C{1.4cm}|C{1.4cm}|C{1.4cm}|}
		\cline{1-7}
		\textbf{Action $\mathbf{a}^i$} & \textbf{Description} & \textbf{Illustration} & \textbf{Effect on $\mathbf{L}$} & \textbf{$\Delta X$ (mm)} & \textbf{$\Delta Y$ (mm)} & \textbf{$\Delta Z$ (mm)} \\ 
		\hline
		Grasp $\mathbf{a}^{g}$  & The hand initially in claw shape is closed. &
		\vspace{0.3mm}
		\includegraphics[width=0.07\textwidth]{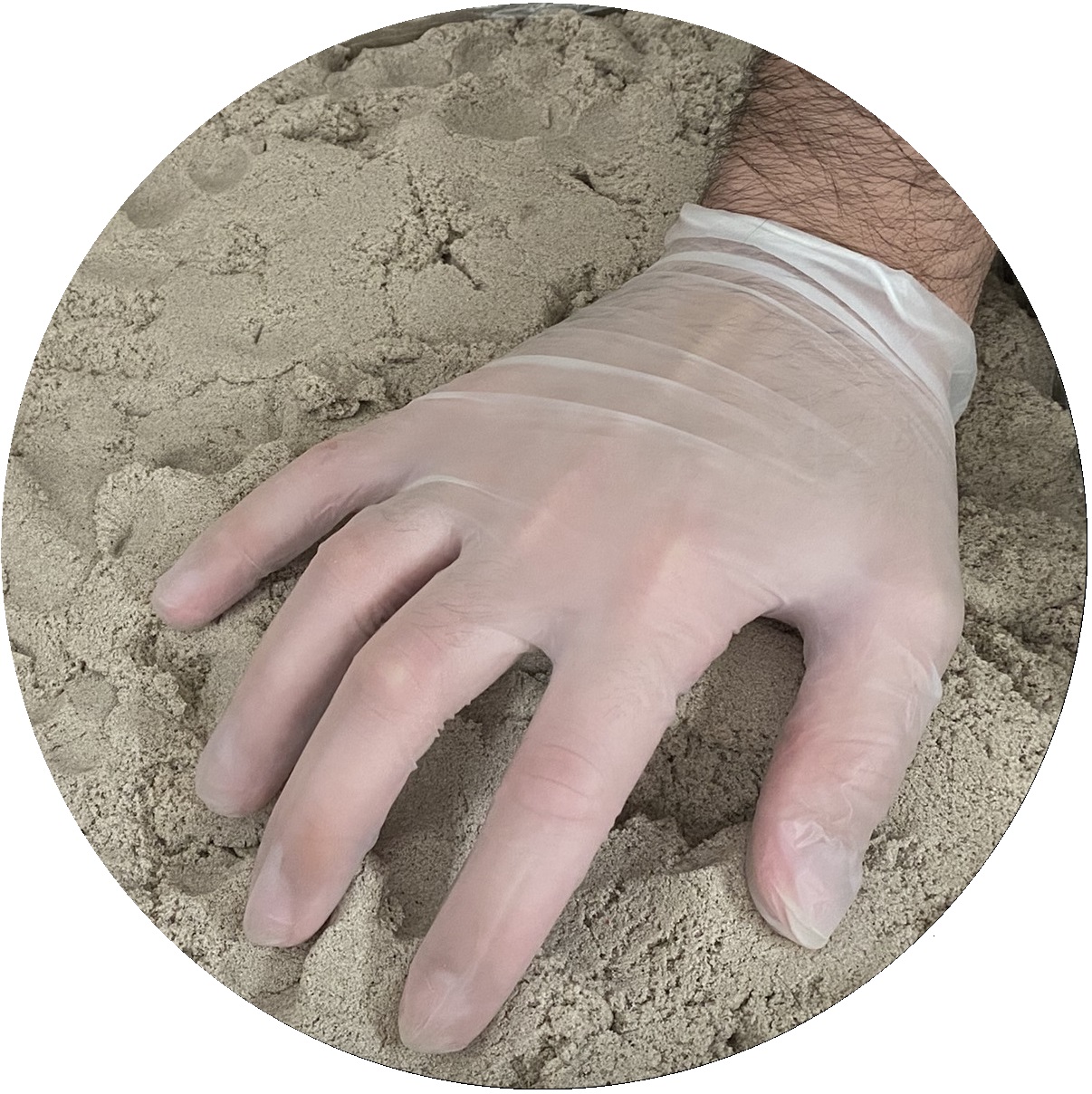}
		\includegraphics[width=0.07\textwidth]{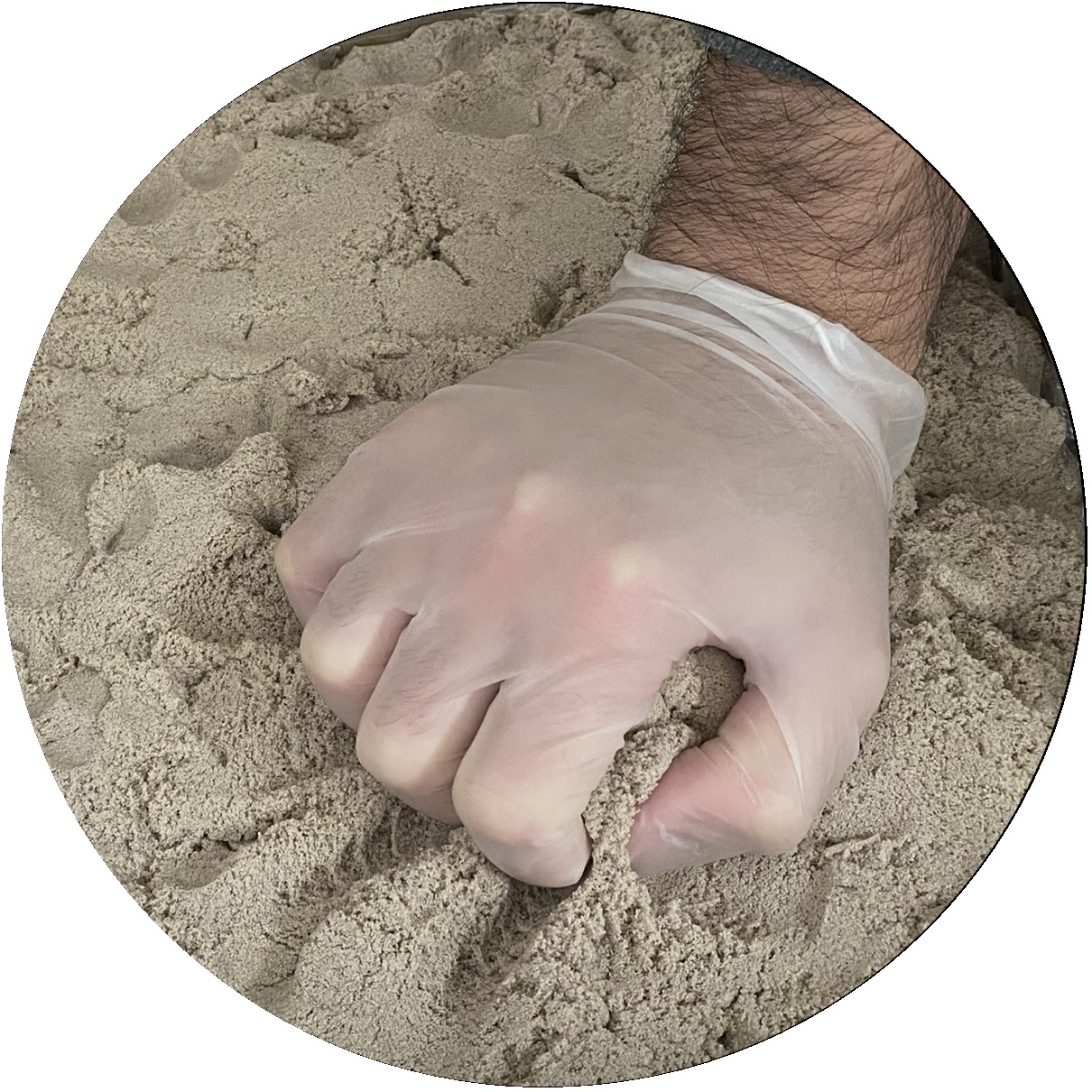} & \includegraphics[width=0.1\textwidth]{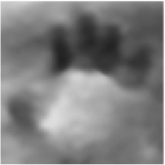}
		& 218 & 195 & 50 \\
		\hline
		Knock $\mathbf{a}^{k}$ & The closed fist pushes downward.  & 
		\includegraphics[width=0.07\textwidth]{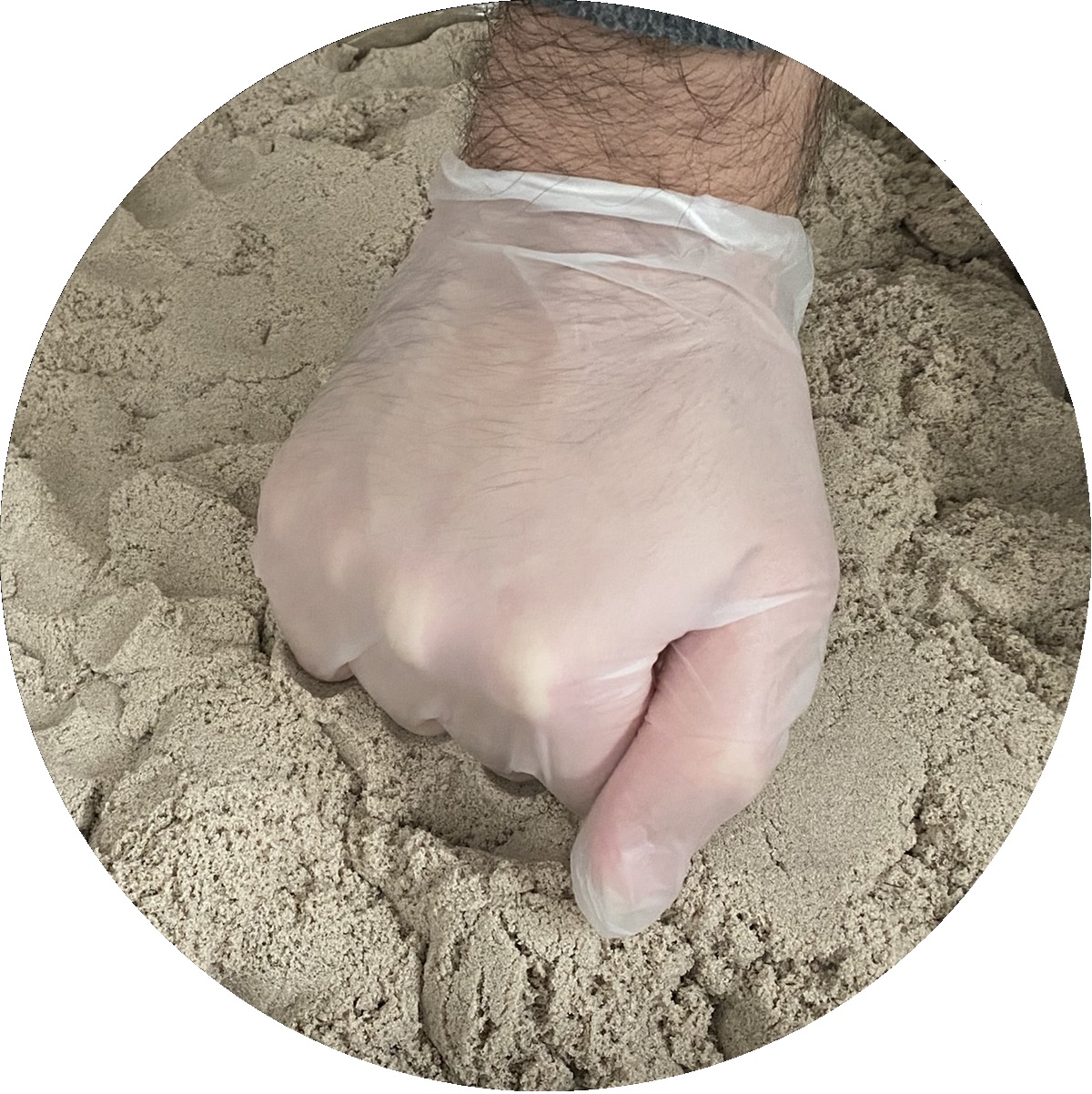}
		\includegraphics[width=0.07\textwidth]{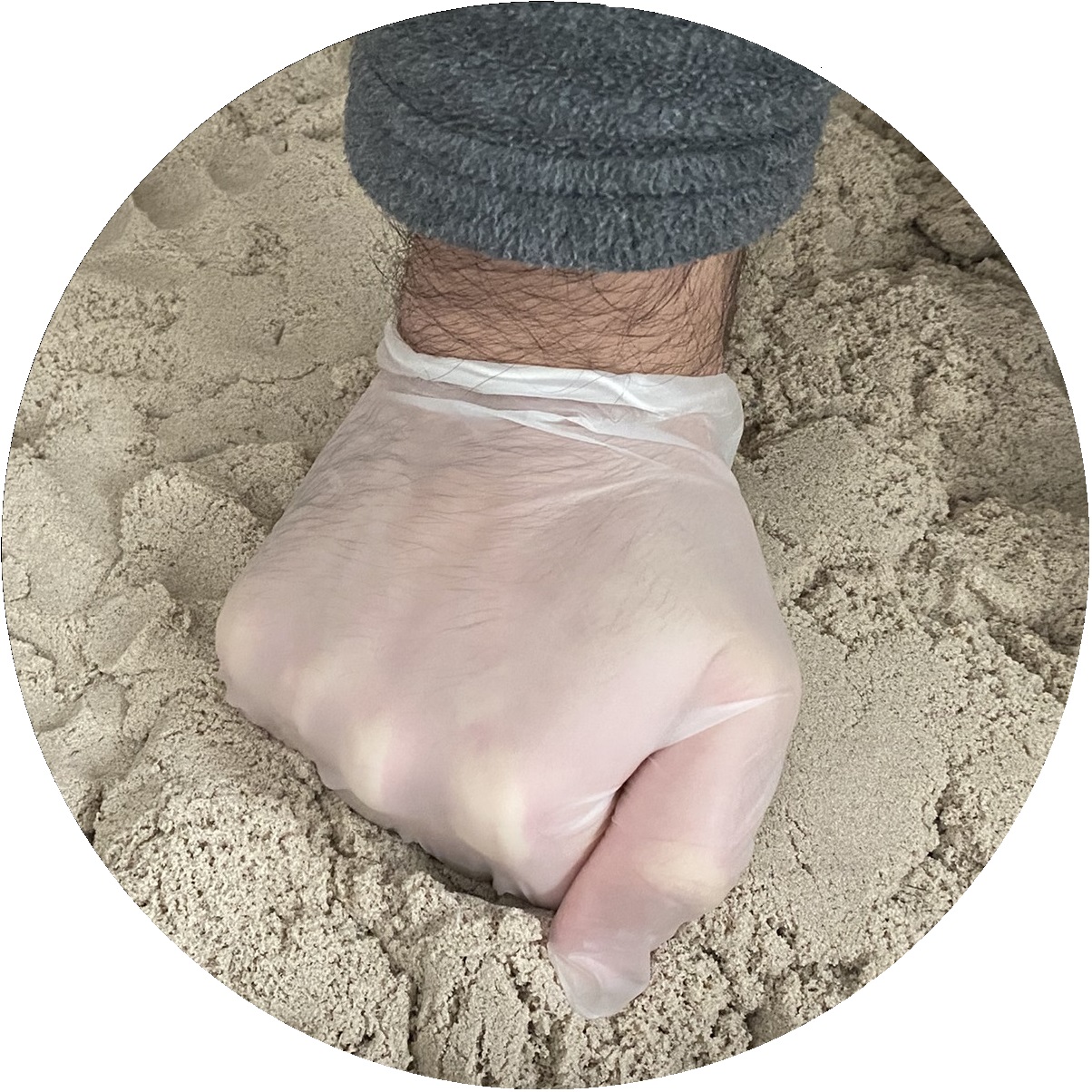} & \includegraphics[width=0.1\textwidth]{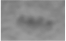} &  84 & 75 & 50\\
		\hline
		Press $\mathbf{a}^{r}$ & The fist bottom pushes downward.  & 
		\includegraphics[width=0.07\textwidth]{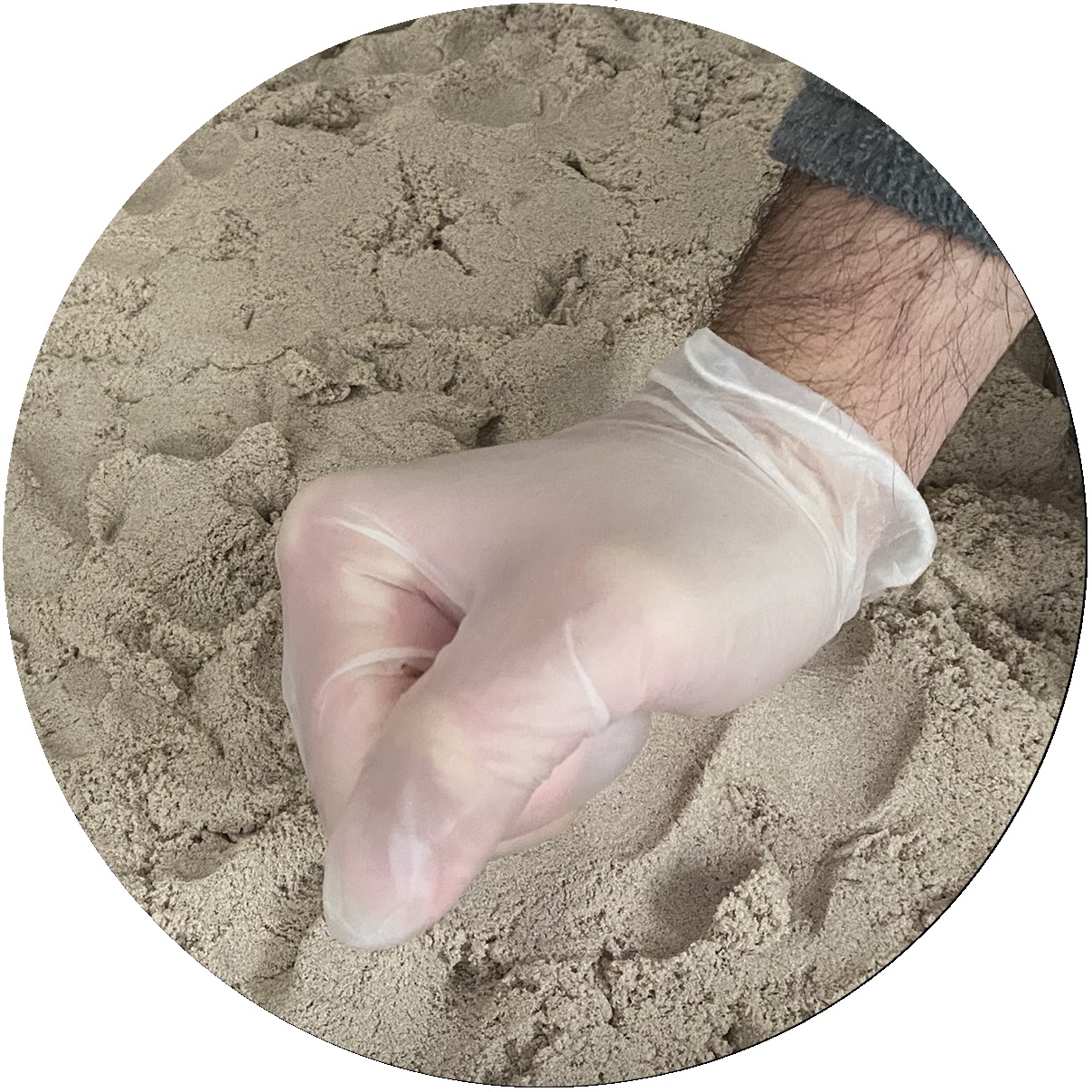}
		\includegraphics[width=0.07\textwidth]{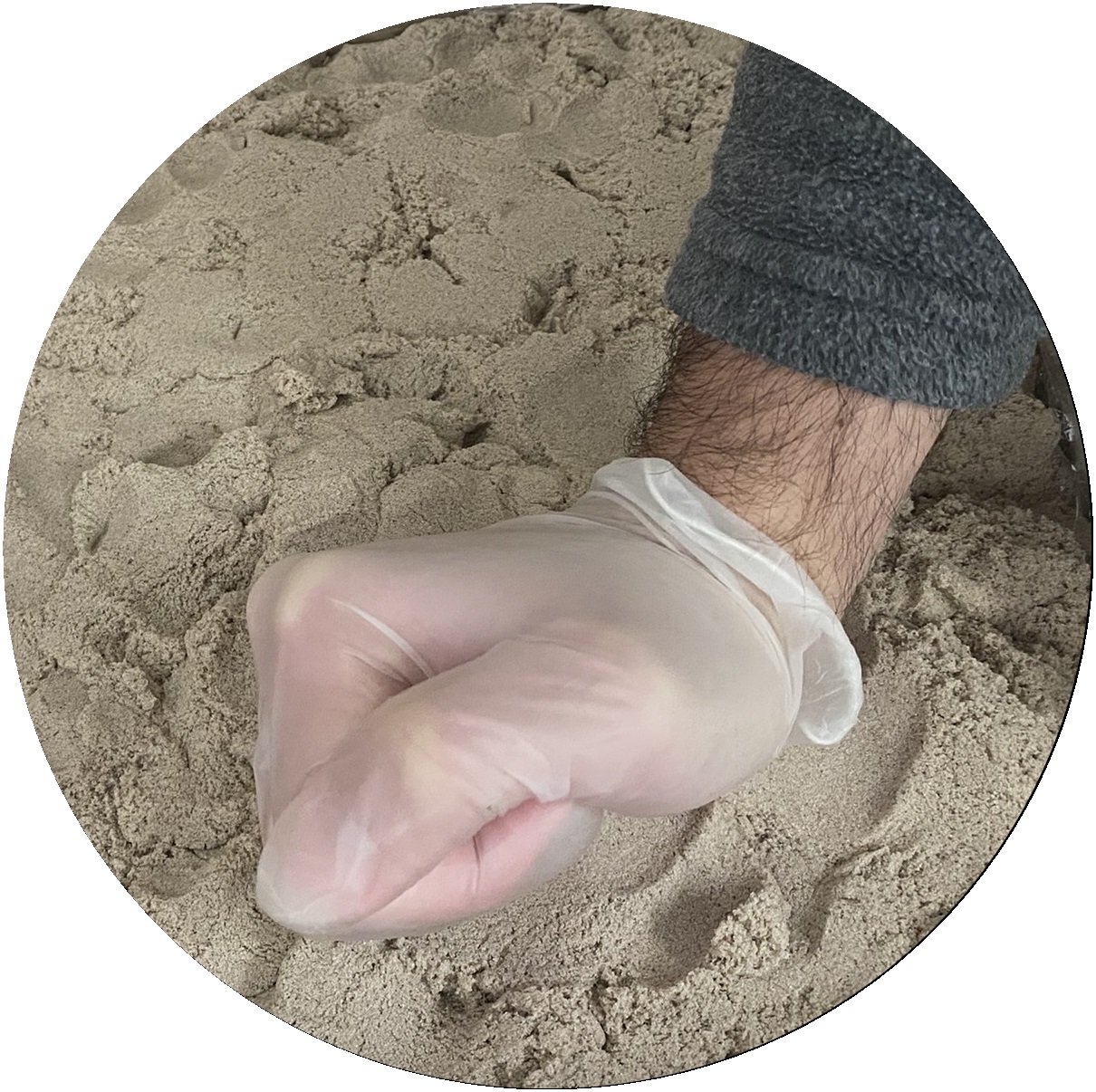} & \includegraphics[width=0.045\textwidth]{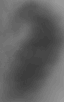} &  46 & 103 & 50\\
    	\hline
		Pinch $\mathbf{a}^{n}$ & The index and middle fingers close. & \includegraphics[width=0.07\textwidth]{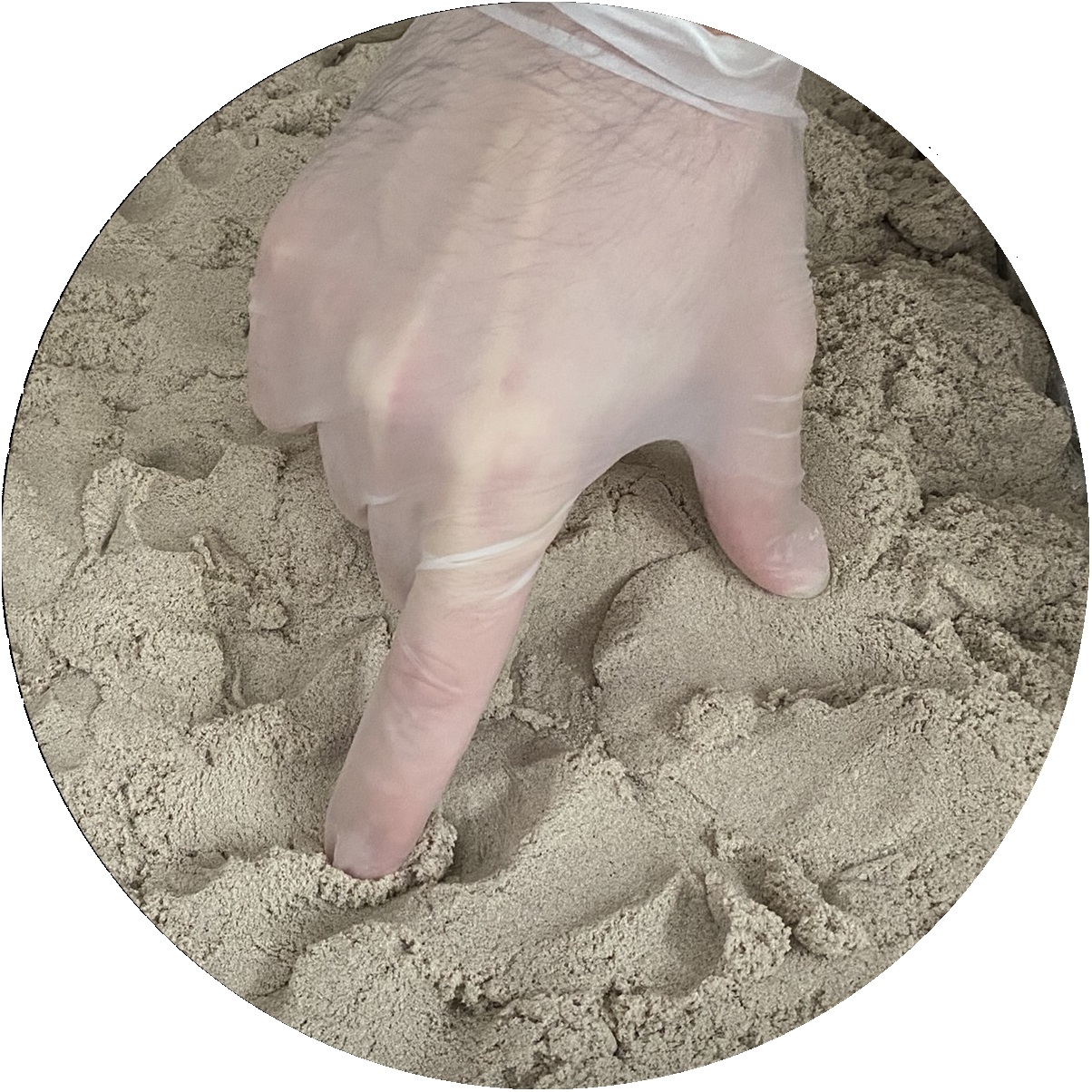}
		\includegraphics[width=0.07\textwidth]{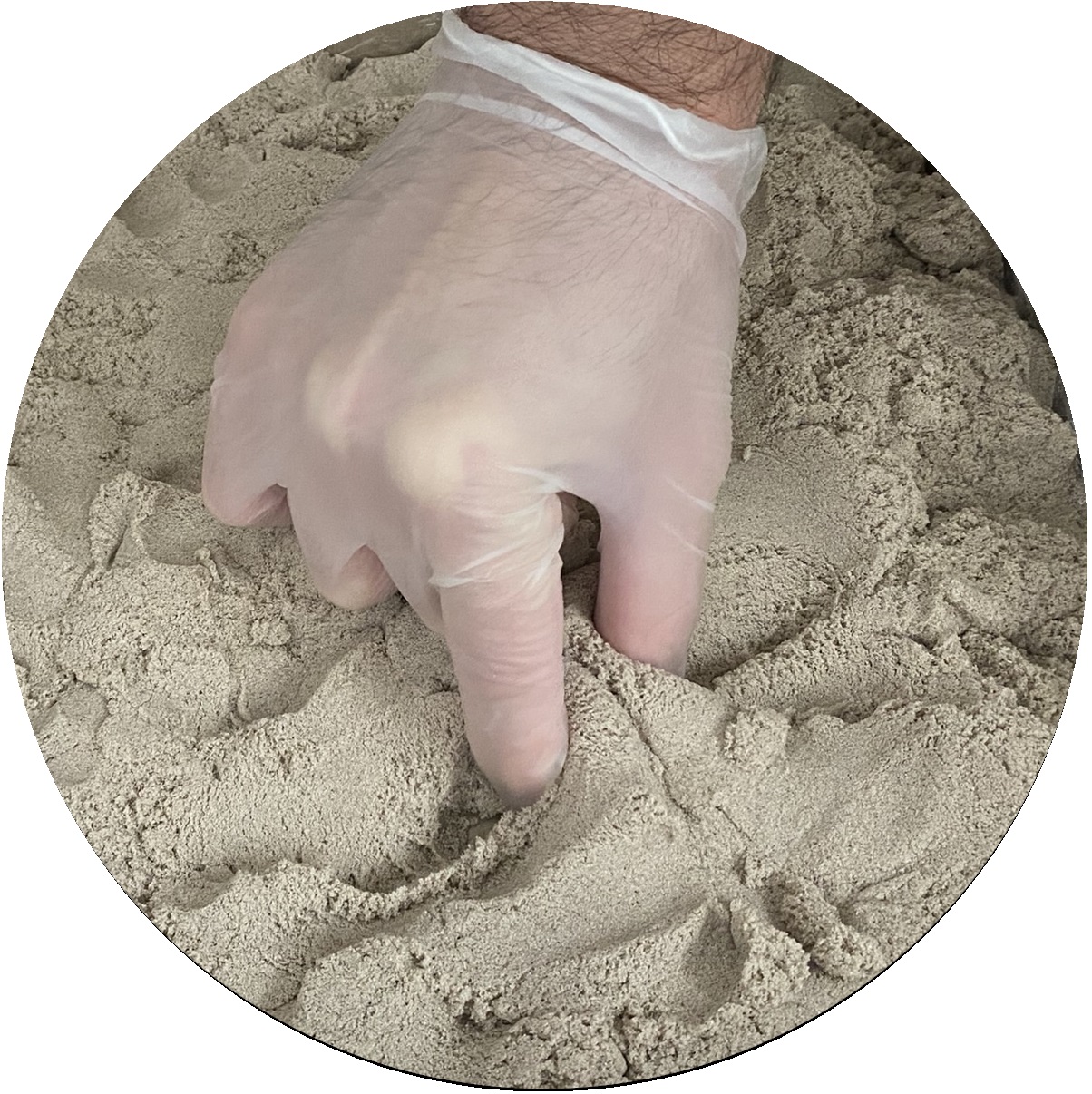} & \includegraphics[width=0.04\textwidth]{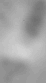} &
		42 & 85 & 50\\ 
		\hline
		Poke $\mathbf{a}^{p}$ & The index finger pushes downward. & \includegraphics[width=0.07\textwidth]{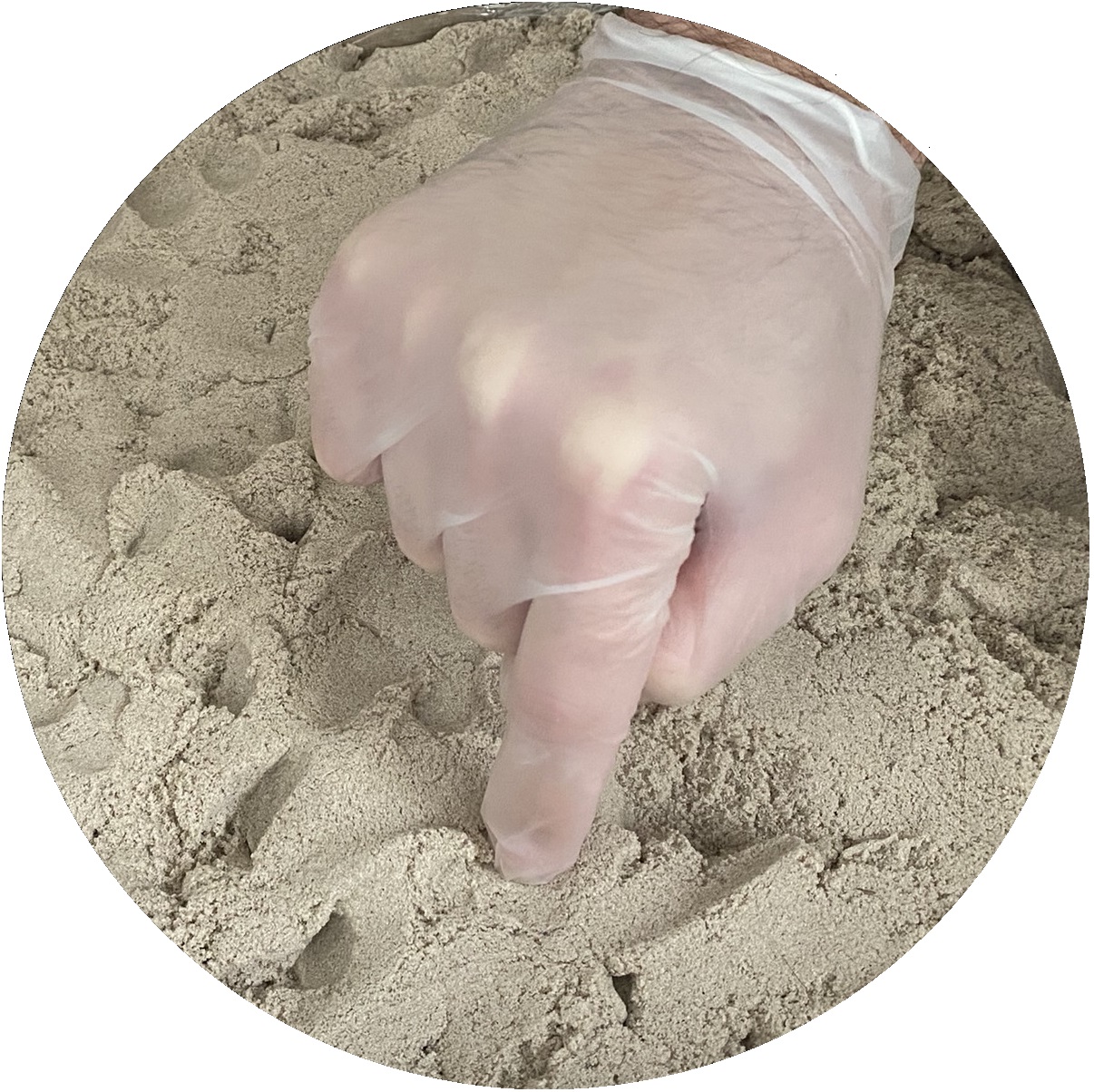}
		\includegraphics[width=0.07\textwidth]{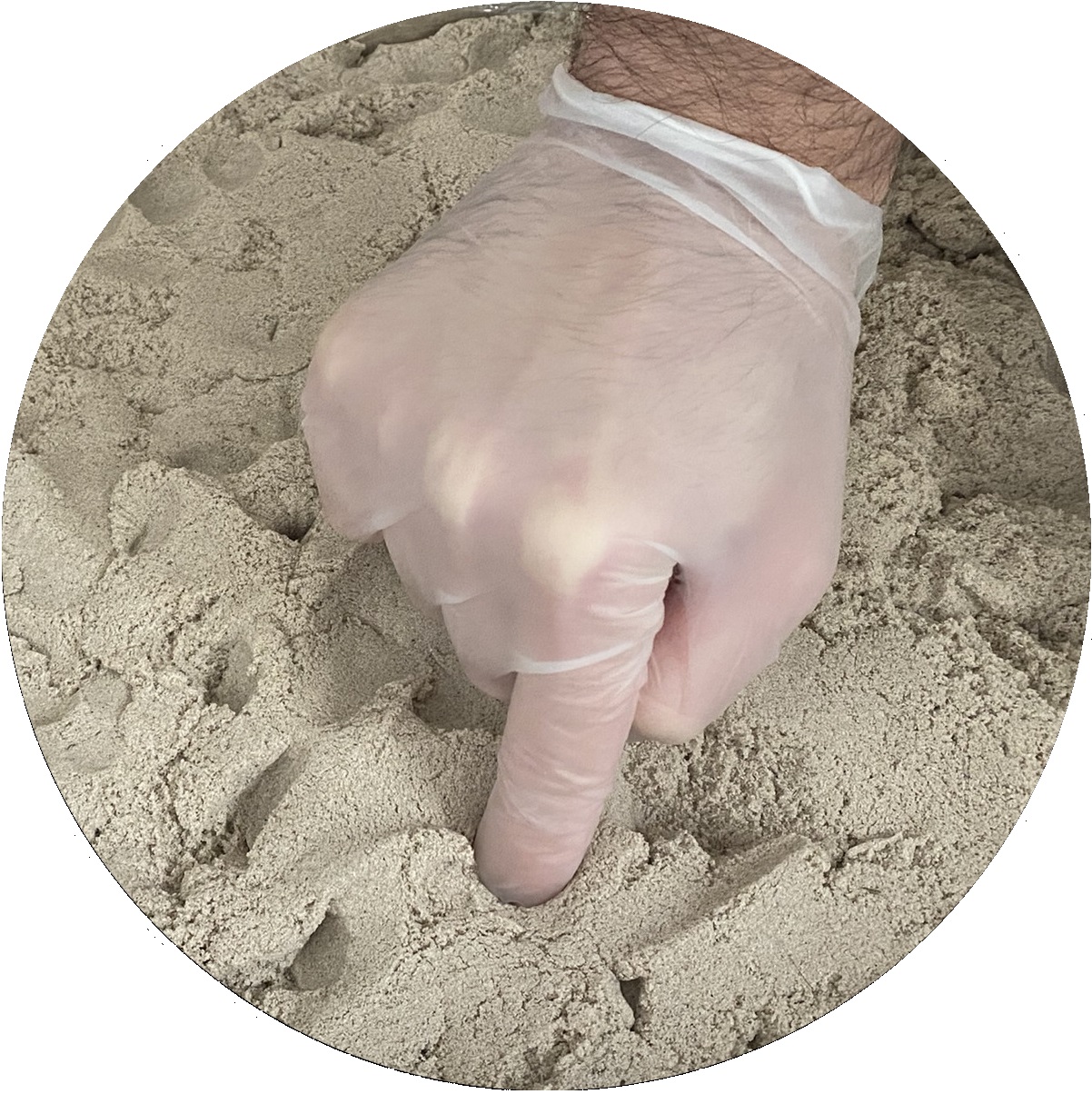} & \includegraphics[width=0.06\textwidth]{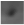} &
		22 & 18 & 50\\ 
		\hline
	\end{tabular}
 \vspace{-0.4cm}
\end{table*}

\textit{\underline{Definition: Robot action $\mathbf{a}$.}} An action is a set of joint space trajectories, which makes the robot modify $\mathcal{P}$. Each action $\mathbf{a}^i$ is applicable at a position $\mathbf{t} = \left(X Y Z\right)^\top$ in the camera frame, within a finite set of possible (action-dependent) positions, $\mathbf{t}^{j}$, $j =1,\dots,N$. We name ${\mathcal{A}} = \left\{ \mathbf{a}^{ij} \right\}$ the set of all actions over all positions.

In this work, we have selected $5$ actions (grasp, knock, press, pinch and poke) after having inspected the database of human sculpting images presented in~\cite{cherubini:ijrr:2020}. Table~\ref{table:Act} shows the characteristics of each of these actions: name, description, illustration (when applied by humans), effect on a depth image $\mathbf{L}$, and ``size'' (parameters $\Delta X$, $\Delta Y$, $\Delta Z$, which we will detail hereby).

We model the robot and environment (i.e., the point cloud) as a discrete-time system. At each iteration $k \in \left[  1, K \right] \subset \mathcal{N}$, the robot executes an action $\mathbf{a}_k \in {\mathcal{A}}$ on point cloud $\mathcal{P}_k$. The joint space trajectories of the action are completed within the iteration, and the point cloud is changed to a new point cloud
\begin{equation}
{\mathcal{P}}_{k+1} = \mathbf{f} \left({\mathcal{P}}_{k}, \mathbf{a}_k \right),
\label{eq:model}
\end{equation}
with $\mathbf{f} \left( . \right)$ an appropriate mathematical function modeling the point cloud changes.

\textit{\underline{Hypotheses}} We rely on the following assumptions:
\begin{enumerate}
	\item 
	The process is Markov; therefore, we can remove the dependency from $k$ and rewrite~(\ref{eq:model}) as:
	\begin{equation}
	{\mathcal{P}}_{\mathbf{a}} = \mathbf{f} \left({\mathcal{P}}, \mathbf{a} \right).
	\label{eq:markovModel}
	\end{equation}
 This assumption also implies that: 1/ no deforming phenomena, aside from those caused by $\mathbf{a}$, act on the point cloud, and 2/ the deforming phenomena caused by $\mathbf{a}$ are constant.
	\item
	The effect of each action on the point cloud is bounded in the 3D space, and contained within a box, denoted $\mathcal{B}_{ij}$ for action $\mathbf{a}^{ij}$. We have designed the actions in ${\mathcal{A}}$, so that all boxes $\mathcal{B}_{ij}$ are entirely visible (i.e., $\mathcal{B}_{ij} \subset \mathcal{F}$).
	We formalize this hypothesis as follows: the points which differ between the prior and posterior point clouds are:
	\begin{equation}
	\begin{array}{l}
	\mathbf{p} \in \left( \mathcal{P} \cup \mathcal{P}_{\mathbf{a}_{ij}} \right) \setminus \left(\mathcal{P} \cap \mathcal{P}_{\mathbf{a}_{ij}} \right)\\
	\implies \mathbf{p} \in \mathcal{B}_{ij} = \left[X^- ; X^+ \right] \times \left[Y^- ; Y^+ \right] \times \left[Z^- ; Z^+ \right],
	\end{array}
	\label{eq:bbox}
	\end{equation}
	with $\left(X^\pm Y^\pm Z^\pm\right)^\top = \mathbf{t} \pm \left( \Delta X \Delta Y \Delta Z \right)^\top$. The box is centered at the action's position $\mathbf{t}^{j}$, and its size depends on the action type (e.g., an action may affect a bigger volume than another, see Table~\ref{table:Act}).
	\item
	The positions of all actions in ${\mathcal{A}}$ have the same $Z$ coordinate, and the depth of $\mathcal{B}_{ij}$ is negligible with respect to the action depth ($\Delta Z << Z$), so $Z^- \approx Z^+ \approx Z$.
	\item
	The effects of the actions are position invariant: given $i$, the effect of $\mathbf{a}^{i}$ on the points contained in $\mathcal{B}_{ij}$ is the same for all $\mathbf{t}_{j}$, $j =1,\dots,N$.
\end{enumerate}

\textit{\underline{Definition: point cloud shaping task}} Given a final desired point cloud $\mathcal{P}_*$, \textit{the point cloud shaping task} consists in shaping point cloud $\mathcal{P}$ into $\mathcal{P}_*$. We define $d_k \left(\mathcal{P}_k, \mathcal{P}_*\right) \in \mathcal{R}^+$ as a scalar function, which measures the distance between $\mathcal{P}_k$ and $\mathcal{P}_*$, so that $d_k = 0 \iff \mathcal{P}_k = \mathcal{P}_*$. Then, the task of shaping a point cloud to $\mathcal{P}_*$ with an accuracy $\bar{d} \geq 0$, consists in applying a finite sequence of actions within ${\mathcal{A}}$: $\mathbf{s}_*= \left\{ \mathbf{a}_1, \dots, \mathbf{a}_K \right\}$ so that after $K$ iterations: $d_K  \leq \bar{d}$. We tolerate such an upper bound on the accuracy, since even a human would be incapable of perfectly reproducing ($d_K  = 0$) a given point cloud. Yet, the sequence of actions should make $d_K$ decrease. More formally, it must be possible to apply, at each iteration $k$, an action $\mathbf{a}_k$ to reduce $d_k$:
\begin{equation}
d_{k+1} < d_k.
\label{eq:errorDecrease}
\end{equation}
The above equation assumes that the task can be solved using a greedy algorithm. Greedy heuristics are known to produce suboptimal results on many problems, and longer term planning should be preferred. Yet, since the focus of this paper is not planning, we have posed this assumption to simplify the point cloud shaping implementation. Below, we explain how we have defined $d$.
 
\subsection{Distance between two Visible Point Clouds}

Consider two point clouds $\mathcal{P}$ and $\mathcal{Q}$, visible by the depth camera in images $\mathbf{L}_P$ and $\mathbf{L}_Q$.
In this Section, we define the metric $d \left( \mathcal{P}, \mathcal{Q} \right) \in \mathcal{R}^+$ used in our work to measure the distance between $\mathcal{P}$ and $\mathcal{Q}$. 

\begin{figure}[t]
	\centering {\centering\includegraphics[width=0.8\columnwidth]{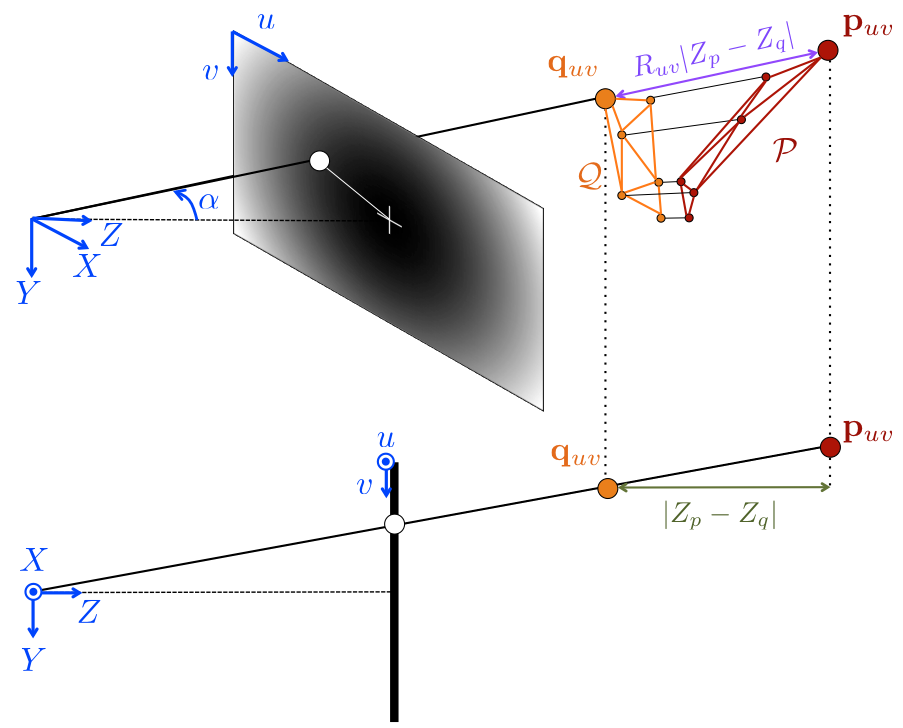}}
	\caption{Geometric interpretation of $R_{uv}$. Perspective (top) and lateral (bottom) view of the pinhole projection of point clouds ${\mathcal{Q}}$ (orange) and ${\mathcal{P}}$ (red). We show (for the white pixel) that the depth offset $\lvert Z_p - Z_q \rvert$ (green) differs from the point distance (purple) by scalar factor $R_{uv}$. This factor is constant on ellipses centred at the camera principal point, and increases with the sizes of the ellipses. In the top figure, we outline these elipses, which characterize image $\mathbf{R}$. We also show (in blue) the camera and image frames, as well as $\alpha$, the angle between the pixel projecting ray and the optical axis.}
	\label{Fig:Coeff}
    \vspace{-0.4cm}
\end{figure}

A common way of defining $d$ is to set it equal to the mean of the distances between all the pairs $\left(\mathcal{P},\mathcal{Q}\right)$ of points matched in the two point clouds:
\begin{equation}
d \left( \mathcal{P}, \mathcal{Q} \right) = \frac{1}{\dim{\mathcal{M}}} \sum_{\left(\mathbf{p},\mathbf{q}\right) \in {\mathcal{M}}}
\norm{\mathbf{p} - \mathbf{q}}_2.
\label{eq:meanDist}
\end{equation}
Alternative approaches are the Chamfer distance or the Earth Mover's Distance~\cite{FanSG16}. However, all these metrics require computing ${\mathcal{M}}$, the set denoting the association between the point clouds, which is defined as:
\begin{equation}
{\mathcal{M}} = \left\{ \left(\mathbf{p}_l, \mathbf{q}_l \right) : \forall \mathbf{p}_l \in \mathcal{P}, \mathbf{q}_l = \underset{m}{\mathrm{argmin}} \norm{\mathbf{p}_l - \mathbf{q}_m}_2 \in \mathcal{Q} 
\right\}.
\end{equation}
This matching operation can be solved by the well known point-to-point ICP algorithm~\cite{icp:94}. Although it is one of the widely used algorithms for aligning three dimensional models given an initial guess of the required rigid transformation, ICP's complexity (quadratic with the point cloud dimension) makes it very slow. 

In this pilot study, the task's complexity is reduced, since the robot can realize a limited number of actions. For human-like molding -- which requires many more action types and positions -- the distance computation time becomes crucial, at both learning and planning steps. In this perspective, a metric based on depth images will scale up much better than one based on point clouds. Furthermore, a point cloud uses approximately \%60 more memory than the equivalent depth image. Hence, using point clouds will substantially increase memory usage and may not be suitable for deep learning on consumer GPUs.

For all these reasons, we propose to match points from $\mathcal{P}$ and $\mathcal{Q}$ which project to the same pixel in images $\mathbf{L}_P$ and $\mathbf{L}_Q$ (see Fig.~\ref{Fig:Coeff}).
Naming
\begin{equation}
\left\{
\begin{array}{l}
\mathbf{p}_{uv} = \mathbf{m} \left( u, v, L^p_{uv}\right) \\
\mathbf{q}_{uv} = \mathbf{m} \left( u, v, L^q_{uv}\right)
\end{array}
\right.
\end{equation}
with $L^p_{uv}$ (respectively, $L^q_{uv}$) the luminance of pixel $\left(u,v\right)$ in depth image $\mathbf{L}_{P}$ ($\mathbf{L}_Q$), we propose to use:
\begin{equation}
{\mathcal{M}} = \left\{ 
\mathbf{p}_{uv}, 
\mathbf{q}_{uv}, \forall \left(u,v\right) \in \mathcal{I} \right\}.
\end{equation}

Note that since their pixel projection $\left(u,v\right)$ is the same, $\mathbf{p}_{uv}$ and $\mathbf{q}_{uv}$ have identical $x$ and $y$. Applying first~(\ref{eq:pinhole2}) and then~(\ref{eq:depth}), we can express their relative distance as:
\begin{equation}
\begin{split}
\norm{\mathbf{p} - \mathbf{q}}_2 &= \sqrt{\left(x Z_p- x Z_q\right)^2+\left(y Z_p- y Z_q\right)^2+\left(Z_p-Z_q\right)^2}\\
&= \sqrt{1 + x^2 + y^2} \; \lvert Z_p- Z_q \rvert \\
&= R_{uv} \frac{Z_{max}-Z_{min}}{2^b-1} \lvert L^p_{uv} - L^q_{uv} \rvert.
\end{split}
\label{eq:distPix}
\end{equation}
In this equation we have noted: $Z_{p} = Z \left( L^{p}_{uv} \right)$, $Z_{q} = Z \left( L^{q}_{uv} \right)$ and $R_{uv} = \sqrt{1 + \left(\frac{u-u_0}{f_x} \right)^2 +  \left( \frac{v-v_0}{f_y} \right) ^2 }$. The value $R_{uv}$ is independent from the point clouds, is constant on ellipses centered at the camera principal point (where it is $1$), and it increases with the sizes of the ellipses. A geometric interpretation when $f_x = f_y$ is $R_{uv} = 1 / cos \alpha$, with $\alpha$ the angle between the pixel projecting ray and the optical axis (see Fig.~\ref{Fig:Coeff}). For the Intel RealSense D435, with nominal parameters\footnote{See \url{https://www.intelrealsense.com/depth-camera-d435}} $u_0=424$, $v_0=240$, $f_x=415$, $f_y=373$, $R_{uv}$ varies between $\underline{R}=1$ (for the principal point) and $\overline{R}=1.568$ (for the four corner pixels).

Injecting~(\ref{eq:distPix}) in~(\ref{eq:meanDist}) for all the pixels yields:
\begin{equation}
d \left( \mathcal{P}, \mathcal{Q} \right) =  \frac{Z_{max}-Z_{min}}{wh\left(2^b-1\right)} 
\sum_{\left(u,v\right) \in \mathcal{I}}
R_{uv} \lvert L^p_{uv} - L^q_{uv} \rvert.
\end{equation} 

Since $Z_{max}$ and $Z_{min}$ are the same for all pairs of images $\left( \mathbf{L}_P, \mathbf{L}_Q \right)$, we can remove them from the equation. We can also divide by $\overline{R}$, to redefine the distance on unit interval $\mathcal{U}$:
\begin{equation}
d \left( \mathcal{P}, \mathcal{Q} \right) = \frac{
	\sum_{\left(u,v\right)\in \mathcal{I}}
	R_{uv} \lvert L^p_{uv} - L^q_{uv} \rvert}{wh\left(2^b-1\right)\overline{R}} \in \mathcal{U} = \left[0, 1 \right].
\label{eq:dist}
\end{equation}
Let us now define matrix
\begin{equation}
	\mathbf{R} = \frac{1}{\left(2^b-1\right)\overline{R}} \left(
	\begin{array}{ccc}
		R_{11} & \cdots & R_{1w} \\
		\vdots  & \ddots & \vdots \\
		R_{h1} & \cdots & R_{hw}
	\end{array}
	\right)\label{eq:radMatrix}
\end{equation} 
shown in Fig.~\ref{Fig:Coeff} for the Intel RealSense D435. Then, naming
\begin{equation}
\mathbf{I} = \mathbf{R} \circ \mathbf{L} \in \mathcal{U}^{w\times h},
\label{eq:newMat}
\end{equation}
with $\circ$ the Hadamard product, $d$ can be expressed as a function of these new normalized images $\mathbf{I}$:
\begin{equation}
d \left( \mathcal{P}, \mathcal{Q} \right) = d \left( \mathbf{I}_p , \mathbf{I}_q \right) =
\frac{1}{wh} \sum_{\left(u,v\right)\in \mathcal{I}} \lvert I^{p}_{uv} - I^{q}_{uv} \rvert.
\label{eq:distNewImg}
\end{equation}
Therefore, in the rest of the paper (except when specified) we work with images $\mathbf{I}$, rather than $\mathbf{L}$. 

We have analytically proved that our distance is equivalent to the weighted pixelwise difference between the two depth images. To illustrate the difference between our distance and an unweighted version, refer to the example of Fig.~\ref{counterexample}. The same two point clouds, red and orange,  each composed of three points, are seen with different camera orientations in Fig.~\ref{counterexample}(left) and Fig.~\ref{counterexample}(right). With our technique, the distance between the point clouds is the same in Fig.~\ref{counterexample}(left) and Fig.~\ref{counterexample}(right), as it should be (i.e., viewpoint invariant). Instead, with the unweighted version, which merely uses the depth differences, the distance in Fig.~\ref{counterexample}(right) will be much smaller than in Fig.~\ref{counterexample}(left). To summarize, our method calculates the geometric distance between two points instead of its projection along the camera axis (i.e., instead of the difference between the two points' depths). Hence, we think that our metric is faster and at least as easy to interpret than other similar metrics, such as Chamfer distance or the Earth Mover's Distance. Thus, it can also be useful for planning algorithms.

\begin{figure}[t]
	\centering 
	\includegraphics[trim=0 4cm 0 0, clip,width=0.8\columnwidth]{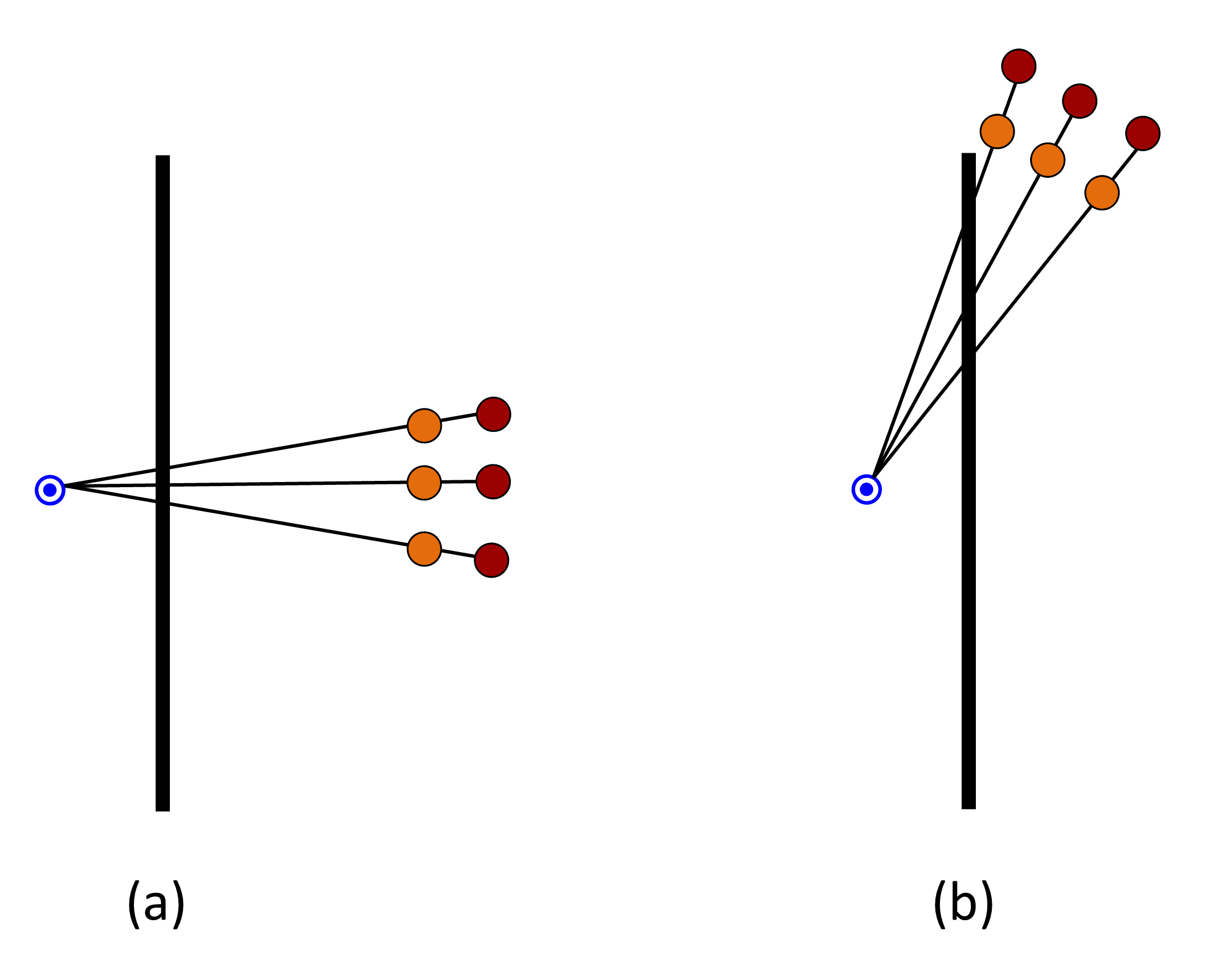}
	\caption{The same two point clouds, red and orange, each composed of three points, seen with different camera orientations (left) and (right).}
	\label{counterexample}
    \vspace{-0.6cm}
\end{figure}

\begin{prop}
If both $\mathcal{P}$ and $\mathcal{Q}$ are visible point clouds, $d \left( \mathcal{P}, \mathcal{Q} \right) = 0$ if and only if $\mathcal{P} = \mathcal{Q}$. Therefore, $d$ is an appropriate metric for the \textit{Point Cloud Shaping Task}.
\end{prop}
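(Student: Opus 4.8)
The plan is to exploit the closed form of the metric derived just above the statement, namely equation~(\ref{eq:distNewImg}), which expresses $d \left( \mathcal{P}, \mathcal{Q} \right)$ as $\frac{1}{wh} \sum_{\left(u,v\right)\in \mathcal{I}} \lvert I^{p}_{uv} - I^{q}_{uv} \rvert$. Since this is a (scaled) sum of absolute values of real numbers, it vanishes if and only if every term vanishes; this gives the chain $d = 0 \iff I^{p}_{uv} = I^{q}_{uv}$ for all $\left(u,v\right) \in \mathcal{I}$. The first step is therefore to observe that each factor $R_{uv}$ appearing in $\mathbf{R}$ is strictly positive (indeed $R_{uv} \geq 1$), so dividing out the Hadamard factor in~(\ref{eq:newMat}) is legitimate and $I^{p}_{uv} = I^{q}_{uv}$ is equivalent to $L^{p}_{uv} = L^{q}_{uv}$. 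Equivalently, one can go back to~(\ref{eq:distPix}) and note that the scalar $R_{uv} \frac{Z_{max}-Z_{min}}{2^b-1}$ multiplying $\lvert L^p_{uv} - L^q_{uv} \rvert$ is strictly positive, so $\norm{\mathbf{p}_{uv} - \mathbf{q}_{uv}}_2 = 0 \iff L^{p}_{uv} = L^{q}_{uv}$.

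The second step is to translate pixelwise luminance equality into equality of the point clouds. Here I would invoke the bijectivity of $\mathbf{m}$ established in~(\ref{eq:PtoI}): the visible point cloud $\mathcal{P}$ is by definition the image of $\mathcal{D}$ under $\mathbf{m}$, and $\mathcal{D}$ is determined pixel-by-pixel by the luminances $L^{p}_{uv}$. Concretely, $\mathbf{p}_{uv} = \mathbf{m}(u,v,L^{p}_{uv})$ and $\mathbf{q}_{uv} = \mathbf{m}(u,v,L^{q}_{uv})$ share the same $x,y$ (same pixel), so by~(\ref{eq:pinhole2}) they are equal iff $Z_p = Z_q$, which by the linear model~(\ref{eq:depth}) holds iff $L^{p}_{uv} = L^{q}_{uv}$. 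Running this over all $\left(u,v\right)\in\mathcal{I}$ shows $\mathbf{L}_P = \mathbf{L}_Q \iff \mathcal{P} = \mathcal{Q}$ (as the indexed sets $\{\mathbf{p}_{uv}\}$ and $\{\mathbf{q}_{uv}\}$), completing the ``if and only if''. Stringing the two equivalences together yields $d \left( \mathcal{P}, \mathcal{Q} \right) = 0 \iff \mathcal{P} = \mathcal{Q}$.

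Finally, to justify the second sentence of the statement, I would remark that the quantity $d$ is manifestly non-negative, symmetric in its two arguments, and — being $\frac{1}{wh}$ times an $\ell_1$ norm of the difference $\mathbf{I}_p - \mathbf{I}_q$ — satisfies the triangle inequality; together with the separation property just proved, this makes $d$ a genuine metric on the set of visible point clouds, hence an admissible choice for the scalar function $d_k$ required in the definition of the Point Cloud Shaping Task (which only demanded $d_k \geq 0$ and $d_k = 0 \iff \mathcal{P}_k = \mathcal{P}_*$).

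I do not expect a serious obstacle here: the result is essentially a corollary of the explicit formula~(\ref{eq:distNewImg}) and the bijectivity of $\mathbf{m}$. The only point deserving care is the definitional one — that ``$\mathcal{P} = \mathcal{Q}$'' for visible point clouds is understood as equality of the pixel-indexed families $\{\mathbf{p}_{uv}\}_{(u,v)\in\mathcal{I}} = \{\mathbf{q}_{uv}\}_{(u,v)\in\mathcal{I}}$, i.e.\ both clouds carry the same labelling by $\mathcal{I}$. Under this reading (which is the natural one given the ``Definition: visible point cloud'' paragraph and the fact that $\dim\mathcal{P} = wh$ is fixed), the matching set $\mathcal{M}$ used in the metric coincides with the identity pixel-correspondence, and no ambiguity from nearest-neighbour matching or occlusion arises; the mild non-injectivity of the physical scene-to-cloud map (points at $L_{uv}=0$ or $2^b-1$ being artefacts) is irrelevant, since we are comparing the depth images themselves.
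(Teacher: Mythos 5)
Your proof is correct and follows essentially the same route as the paper's: both argue that a vanishing sum of absolute values forces $I^{p}_{uv}=I^{q}_{uv}$ at every pixel, use the (strictly positive) proportionality factor $R_{uv}$ to pass to $L^{p}_{uv}=L^{q}_{uv}$, and then invoke the bijectivity of $\mathbf{m}$ to identify the point clouds, with the converse obtained by running the chain backwards via $\mathbf{m}^{-1}$. Your additional remarks --- that $R_{uv}\geq 1$ so the Hadamard factor can be divided out, that $d$ also satisfies symmetry and the triangle inequality, and that equality of visible point clouds is to be read as equality of the pixel-indexed families --- merely make explicit what the paper leaves implicit; they do not constitute a different argument.
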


\begin{proof} 
Since $d \left( \mathcal{P}, \mathcal{Q} \right)$ is a sum of absolute values, if it is null, $I^{p}_{uv} = I^{q}_{uv} \forall \left(u,v\right)\in \mathcal{I}$. Since each $I^{p}_{uv}$ (respectively, $I^{q}_{uv}$) is proportional to $L^p_{uv}$ (respectively, $L^q_{uv}$) by the corresponding element of $\mathbf{R}$, if $d \left( \mathcal{P}, \mathcal{Q} \right) = 0$ then $L^p_{uv} = L^q_{uv}$. Then, mapping the two images with $\mathbf{m}$ will yield identical point clouds, so if $d \left( \mathcal{P}, \mathcal{Q} \right) = 0$ then $\mathcal{P} = \mathcal{Q}$.
Conversely: if the point clouds are identical ($\mathcal{P} = \mathcal{Q}$), applying $\mathbf{m}^{-1}$ to each one will yield identical depth images ($\mathbf{L}_p = \mathbf{L}_q$). Then, applying~(\ref{eq:newMat}) followed by~(\ref{eq:distNewImg}) leads to $d \left( \mathcal{P}, \mathcal{Q} \right) = 0$. 
\end{proof}

\section{Predicting the action effect on a point cloud}
\label{sec:predPCloud}

In this section, we explain how to predict the effect of an action on a point cloud.

\subsection{Outline}

At each iteration, action $\mathbf{a}$ changes the current point cloud $\mathcal{P}$, to produce a new point cloud $\mathcal{P}_{\mathbf{a}}$, according to~(\ref{eq:markovModel}). Since $\mathbf{a}$ can be any action within ${\mathcal{A}} = \left\{ \mathbf{a}^{ij} \right\}$, we must rewrite~(\ref{eq:markovModel}) for each possible $\mathbf{a}^{ij}$. We can do this by embedding the effect of $\mathbf{a}^{ij}$ within the expression of $\mathbf{f}$:
\begin{equation}
{\mathcal{P}}_{\mathbf{a}^{ij}} = \mathbf{f}^{ij}\left({\mathcal{P}}\right),
\label{eq:allActions}
\end{equation}
with:
\begin{equation}
\begin{array}{ll}
\mathbf{f}^{ij}: & \mathcal{R}^{3\times w \times h} \rightarrow \mathcal{R}^{3\times w \times h}. 
\end{array}
\label{eq:pCloudChange2}
\end{equation}
To predict the effect of \textit{each} possible action $\mathbf{a}^{ij}$, we need to know each $\mathbf{f}^{ij}$. Both the domain and codomain of these functions have very high dimension ($3 \times w \times h$), and we need $\dim {\mathcal{A}}$ of such functions! This clearly makes the prediction problem intractable, unless we exploit the other work assumptions to simplify it. 

\subsection{Simplifications}

First, since both the prior and posterior point clouds are visible, we can project them on the corresponding images, by applying~(\ref{eq:PtoI}) followed by~(\ref{eq:newMat}). Then, we can rewrite~(\ref{eq:pCloudChange2}) as:
\begin{equation}
\begin{array}{ll}
\mathbf{f}^{ij}: & \mathcal{U}^{w\times h} \rightarrow \mathcal{U}^{w\times h} \\
& \mathbf{I}  \mapsto
\mathbf{I}_{\mathbf{a}^{ij}}.
\end{array}
\label{eq:imgChange}
\end{equation}
By operating on images instead of point clouds, we have reduced the dimensions of the functions' domain and codomain from $3 \times w \times h$ to $w \times h$.

We can further simplify the problem, by considering \underline{Hypothesis 2: the effect of actions is bounded}. The projection of box $\mathcal{B}_{ij}$ in the depth image can be embedded within a rectangular region of interest (ROI). Table~\ref{table:RoiCorners} gives the coordinates of the top left $\left(\underline{u}, \underline{v} \right)$ and bottom right $\left(\overline{u},\overline{v}\right)$ pixel of this ROI, depending on the signs of $X^\pm$ and $Y^\pm$ ($Z^\pm > 0$ for all points of a visible point cloud). We derived the pixel coordinates in the second column of the table by applying~(\ref{eq:PtoI}), rounding to the closest integer (with $\nint{}$) and then clamping the result in the field of view. To illustrate this result, we show in Fig.~\ref{fig:roiBox} two examples: a red bounding box with all four values ($X^+$, $X^-$, $Y^+$, $Y^-$) negative and a green bounding box where all four values are positive.

\begin{table}[t]
	\caption{Pixel coordinates of the top left $\left(\underline{u}, \underline{v} \right)$ and bottom right $\left(\overline{u},\overline{v}\right)$ pixel of the rectangle embedding the image of $\mathcal{B}$.}
	\centering
	\begin{tabular}{|c|c|}
	\cline{1-2}
	\textbf{$X^-$} & $\underline{u}$ \\
	\hline 
	$\geq 0$ & $\mathrm{max} \left( 1, \mathrm{min} \left( \nint{f_xX^- / Z^+ + u_0}, w \right) \right)$\\
	$<0$ & $\mathrm{max} \left( 1, \mathrm{min} \left( \nint{f_xX^- / Z^- + u_0}, w \right) \right)$\\
	\hline
	\textbf{$Y^-$} & $\underline{v}$ \\
	\hline 
	$\geq 0$ & $\mathrm{max} \left( 1, \mathrm{min} \left( \nint{f_yY^- / Z^+ + v_0}, h \right) \right)$\\
	$<0$ & $\mathrm{max} \left( 1, \mathrm{min} \left( \nint{f_yY^- / Z^- + v_0}, h \right) \right)$\\
	\hline
	\textbf{$X^+$} & $\overline{u}$ \\
	\hline 
	$\geq 0$ & $\mathrm{max} \left( 1, \mathrm{min} \left( \nint{f_xX^+ / Z^- + u_0}, w \right) \right)$\\
	$<0$ & $\mathrm{max} \left( 1, \mathrm{min} \left( \nint{f_xX^+ / Z^+ + u_0}, w \right) \right)$\\
	\hline
	\textbf{$Y^+$} & $\overline{v}$ \\
	\hline 
	$\geq 0$ & $\mathrm{max} \left( 1, \mathrm{min} \left( \nint{f_yY^+ / Z^- + v_0}, h \right) \right)$\\
	$<0$ & $\mathrm{max} \left( 1, \mathrm{min} \left( \nint{f_yY^+ / Z^+ + v_0}, h \right) \right)$\\
	\hline
	\end{tabular}
	\label{table:RoiCorners}
     \vspace{-0.5cm}

\end{table}

Note that the clamping can be removed, since under \underline{Hypothesis 2}, all $\mathcal{B}_{ij}$ are entirely visible. This hypothesis (equivalent to $\underline{u}, \overline{u} \in \left[1, w\right]$ and $\underline{v}, \overline{v} \in \left[1, h\right]$) can be guaranteed by setting the action positions $\mathbf{t}$ such that:
\begin{equation}
\left\{
\begin{array}{l}
\frac{\left(1-u_0\right)Z^+}{f_x} + \Delta X \leq X \leq \frac{\left(w-u_0\right)Z^-}{f_x} - \Delta X \\
\frac{\left(1-v_0\right)Z^+}{f_y} + \Delta Y \leq Y \leq \frac{\left(h-v_0\right)Z^-}{f_y} - \Delta Y 
\end{array}
\right.
.
\end{equation}

Under \underline{Hypothesis 3} (i.e., $Z^- \approx Z^+ \approx Z$) the ROI top left and bottom right pixels are always:
\begin{equation}
\left\{
\begin{array}{l}
\left(\underline{u}, \underline{v} \right) = \left( f_x X^- / Z + u_0, f_y Y^- / Z + v_0 \right) \\
\left(\overline{u},\overline{v}\right) = \left( f_x X^+ / Z + u_0, f_y Y^+ / Z + v_0 \right)
\end{array}
\right.
.
\end{equation} 
Note that since all actions have the same $Z$, the ROI width and height, $\underline{w} = \overline{u} - \underline{u} = 2 f_x \Delta X / Z$ and $\underline{h} = \overline{v} - \underline{v} = 2 f_y \Delta Y / Z$, are unique for a given $\mathbf{a}^{i}$ (i.e., they do not depend on $\mathbf{t}$).

\begin{figure}[t]
	\centering 
	\includegraphics[width=0.8\columnwidth]{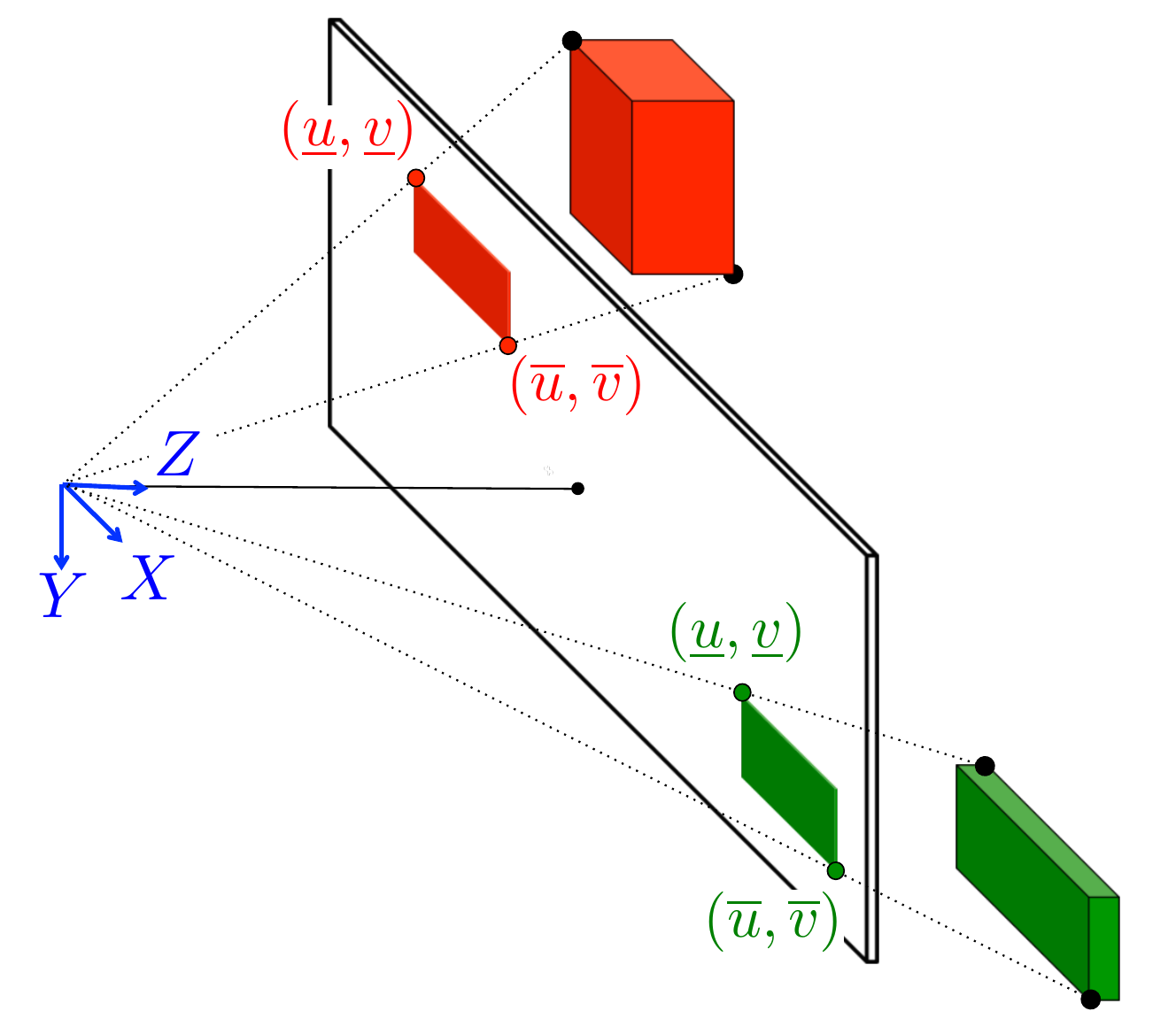}
	\caption{Two examples of bounding boxes (a red one with all four values $X^+$, $X^-$, $Y^+$, $Y^-$ negative and a green one, where all four values are positive) with the corresponding rectangular regions of interest (ROI) in the image plane. The ROI are characterized by the coordinates of the top left $\left(\underline{u}, \underline{v} \right)$ and bottom right $\left(\overline{u},\overline{v}\right)$ pixel. Note that these pixels are the projections of different corners of the two bounding boxes.}
	\label{fig:roiBox}
    \vspace{-0.4cm}
\end{figure}

Let us denote
\begin{equation}
\underline{\mathbf{I}} = \left(
\begin{array}{ccc}
I_{\underline{u}\hspace{0.3mm}\underline{v}} & \cdots & I_{\overline{u}\underline{v}} \\
\vdots  & \ddots & \vdots \\
I_{\underline{u}\overline{v}} & \cdots & I_{\overline{u}\hspace{0.3mm}\overline{v}} \\
\end{array}
\right) \in \mathcal{U}^{\underline{w}\times \underline{h}}
\label{eq:predStep1}
\end{equation}
the submatrix of ${\mathbf{I}}^{}$, containing the values of the pixels which could change after an action in $\mathcal{B}_{ij}$. Action $\mathbf{a}^{ij}$ will transform $\underline{\mathbf{I}}$ into a new image:
\begin{equation}
\underline{\mathbf{I}}_{\mathbf{a}^{ij}} = \left(
\begin{array}{ccc}
I^{ij}_{\underline{u}\hspace{0.3mm}\underline{v}} & \cdots & I^{ij}_{\overline{u}\underline{v}} \\
\vdots  & \ddots & \vdots \\
I^{ij}_{\underline{u}\overline{v}} & \cdots & I^{ij}_{\overline{u}\hspace{0.3mm}\overline{v}} \\
\end{array}
\right) \in \mathcal{U}^{\underline{w}\times \underline{h}}.
\label{eq:predStep2}
\end{equation}
By focusing on a region of the image, we have further reduced the dimensions of the domain and codomain, from $w \times h$ to $\underline{w}  \times \underline{h}$:
\begin{equation}
\begin{array}{ll}
\mathbf{f}^{ij}: & \mathcal{U}^{\underline{w}\times \underline{h}} \rightarrow \mathcal{U}^{\underline{w}\times \underline{h}} \\
& \underline{\mathbf{I}}  \mapsto
\underline{\mathbf{I}}_{\mathbf{a}^{ij}}.
\end{array}
\label{eq:imgRoiChange}
\end{equation}
Yet, we still need to know the function $\mathbf{f}^{ij}$ for each combination of action $i$ and position $j$.

To further simplify the problem, we exploit \underline{Hypothesis 4: actions are position invariant}. This last simplification reduces the number of functions required: instead of $\dim {\mathcal{A}}$ functions, it is sufficient to know the function of each $\mathbf{a}^i$:
\begin{equation}
\begin{array}{ll}
\mathbf{f}^{i}: & \mathcal{U}^{\underline{w}\times \underline{h}} \rightarrow \mathcal{U}^{\underline{w}\times \underline{h}} \\
& \underline{\mathbf{I}}  \mapsto
\underline{\mathbf{I}}_{\mathbf{a}^{i}}.
\end{array}
\label{eq:lastMapping}
\end{equation}
In the next subsection, we explain how we determine the function $\mathbf{f}^{i}$ predicting the effect of action $\mathbf{a}^i$ on a region of interest $\underline{\mathbf{I}}$.

\subsection{Predicting the action effect on a region of interest}

To predict the effect of $\mathbf{a}^i$ (noted $\mathbf{a}$ for simplicity) on $\underline{\mathbf{I}}$, we have tested three different functions $\mathbf{f}$. To design them, we exploit -- for each $\mathbf{a}$ -- a prerecorded dataset of pairs of regions of interest prior/posterior to the action:
\begin{equation}
{\mathcal{G}} = \left\{ \left( \underline{\mathbf{I}}, \underline{\mathbf{I}}_{\mathbf{a}} \right)_1, \dots , \left( \underline{\mathbf{I}}, \underline{\mathbf{I}}_{\mathbf{a}} \right)_{\dim \mathcal{G}}
\right\}
\label{eq:datasetPairs}
\end{equation}
The three designs are: \textit{difference compensation}, \textit{refined generator}, and \textit{difference compensation with refined generator} (denoted respectively $\mathbf{f}_\mathrm{D}$, $\mathbf{f}_\mathrm{CR}$, and $\mathbf{f}_\mathrm{DCR}$). The designs are outlined in Fig.~\ref{fig:pred}; we will detail them in this section, after having explained how to build the dataset~(\ref{eq:datasetPairs}) of prior/posterior pairs. 

\subsubsection{Building a dataset of prior/posterior image ROI}

We record a sequence of depth images $\mathbf{L}$, while repeatedly applying action $\mathbf{a}$. We use~(\ref{eq:newMat}) to convert each image $\mathbf{L}$ into the corresponding $\mathbf{I}$.

We remove all images where the environment is occluded (typically, during the action), and extract pairs of prior/posterior images, which represent the image before/after the action. Then, we manually annotate, in these images, the ROI which contains a ``visible'' (by human standards) difference in depth\footnote{The annotation could be automated using standard image processing techniques, similar to the ones we developed in~\cite{cherubini:ijrr:2020}.}. Afterwards, we adjust the width $\underline{w}$ and height $\underline{h}$ of all the ROI to the mean of all the annotated ones, and we fix their positions so that they are identical in both images of the pair. These ROI will constitute~(\ref{eq:datasetPairs}).


\subsubsection{Difference compensation}

A first simple image processing operation consists in approximating the effect of $\mathbf{a}$ on $\underline{\mathbf{I}}$, by the average (over the dataset~(\ref{eq:datasetPairs})) of the ROI differences
\begin{equation}
\Delta \underline{\mathbf{I}} = 
\frac{1}{\dim \mathcal{G}} \sum^{\mathcal{G}} \left( \underline{\mathbf{I}}_{\mathbf{a}} - \underline{\mathbf{I}}\right). 
\label{eq:DiffImage}
\end{equation}
This image average is added to $\underline{\mathbf{I}}$, to obtain a rough prediction of the ROI, after having applied $\mathbf{a}$:
\begin{equation}
\begin{array}{ll}
\mathbf{f}_\mathrm{D}: & \mathcal{U}^{\underline{w}\times \underline{h}} \rightarrow \mathcal{U}^{\underline{w}\times \underline{h}} \\
& \underline{\mathbf{I}}  \mapsto
\underline{\mathbf{I}} + \Delta \underline{\mathbf{I}}.
\end{array}
\label{eq:Doutput}
\end{equation}
Although this operation may lead some pixels to saturate (to $0$ or $1$), this occurs very rarely in practice. 

\begin{figure*}[t]
	\centering 
	\includegraphics[width=1.8\columnwidth]{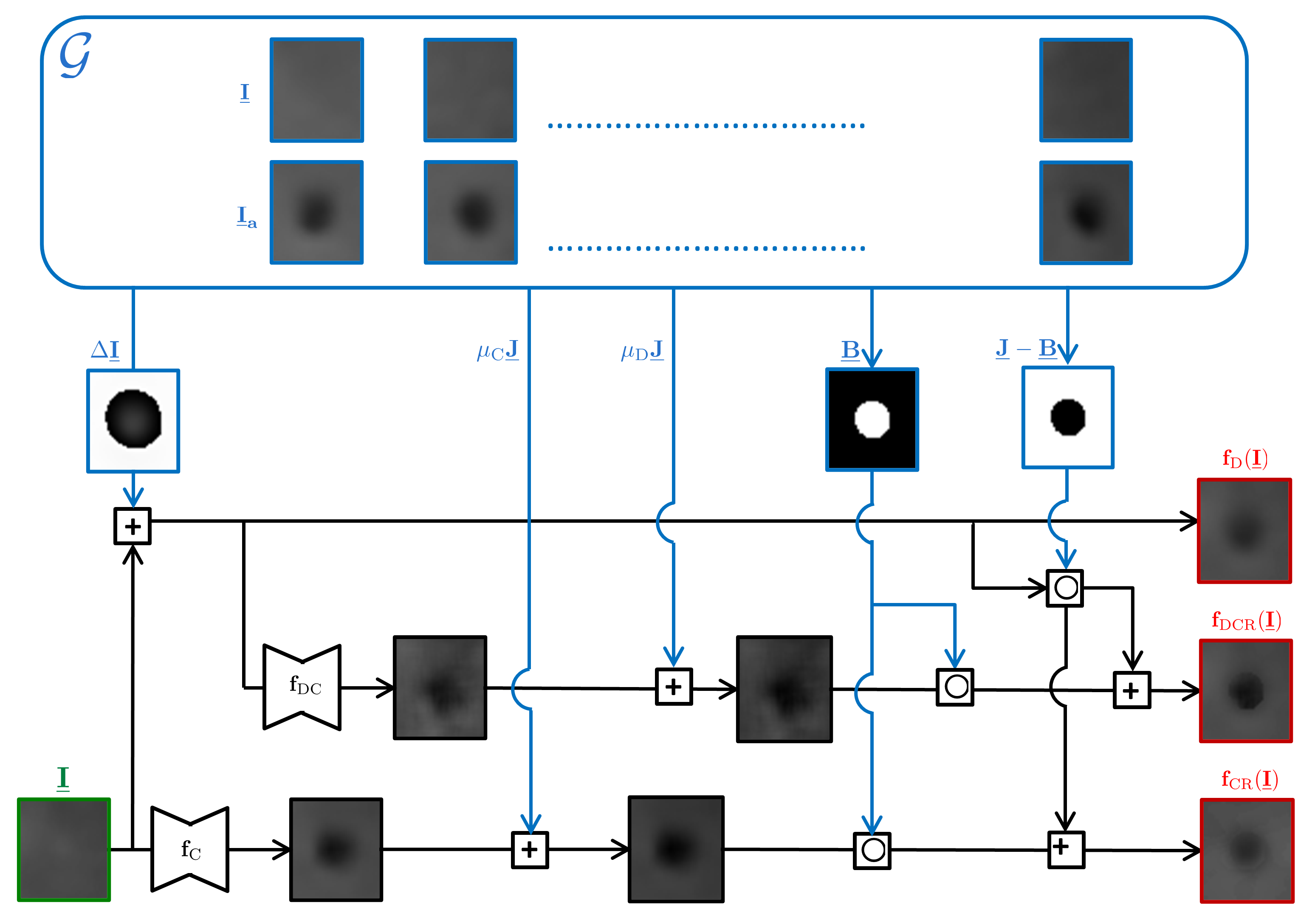}
	\caption{The three prediction functions $\mathbf{f}_\mathrm{D}$, $\mathbf{f}_\mathrm{DCR}$, and $\mathbf{f}_\mathrm{CR}$, illustrated (in red) for the poke action. For the purpose of illustration, in the image $\Delta \underline{\mathbf{I}}$ we have set all negative pixels to white (i.e., to unitary value).}
	\label{fig:pred}
    \vspace{-0.5cm}
\end{figure*}

\subsubsection{Refined Generator}



Our generator is a type of artificial neural network which is constituted by two main parts. The encoder maps the input into feature representations at multiple levels and the decoder maps the features onto the pixel space. We design and train (with the prior/posterior images from ${\mathcal{G}}$) a generator to generate a ROI from an input one: 
\begin{equation}
\begin{array}{ll}
\mathbf{f}_\mathrm{C}: & \mathcal{U}^{\underline{w}\times \underline{h}} \rightarrow \mathcal{U}^{\underline{w}\times \underline{h}} \\
& \underline{\mathbf{I}}  \mapsto
\mathbf{f}_\mathrm{C} \left( \underline{\mathbf{I}} \right).
\end{array}
\label{eq:Coutput}
\end{equation}
We have implemented $\mathbf{f}_\mathrm{C}$ as a modified U-net image generator~\cite{u-net}, similar to the one used by the pix2pix generative adversarial network~\cite{pix2pix2017}, with the following specifications  (illustrated in Fig.~\ref{fig:AEarch}). For the encoder, the size of the input layer varies according to the ROI dimensions $(\underline{w},\underline{h})$. The input is followed by 4 downscale layers, each consisting of two 3x3 convolution layers followed by a 2x2 maxpooling layer. Filter sizes of the convolution layers are 32, 64, 128 and 256, respectively. The decoder has 5 upscale layers and 1 output layer. Upscale layers consist of two 3x3 convolution layers, followed by a 2x2 upsampling layer. Filter size of convolution layers are 512, 256, 128, 64 and 32, respectively. Outputs of the second convolution layer from downscale layers are concatenated with the output of the symmetrical upsampling layer, before being fed to the following upscale layers. The output layer is a 1x1 convolution layer with $\tanh$ activation function. All the strides are set to 1 and rectified linear unit activation is used in all the convolution layers, except for the output layer. We use as loss function between the output of $\mathbf{f}_\mathrm{C}$ and the corresponding posterior $\underline{\mathbf{I}}_\mathbf{a}$ (ground truth or expected output, taken from ${\mathcal{G}}$), the distance $d$ defined in~(\ref{eq:distNewImg}):
\begin{equation}
	d \left( \underline{\mathbf{I}}_\mathbf{a} , \mathbf{f}_\mathrm{C} \left( \underline{\mathbf{I}} \right) \right) = \left|  \underline{\mathbf{I}}_\mathbf{a} - \mathbf{f}_\mathrm{C} \left( \underline{\mathbf{I}} \right) \right|.
	\label{eq:lossC}
\end{equation}
Note that this loss function resembles the mean absolute error, commonly used in Convolutional Neural Networks\footnote{See for instance the \textit{MeanAbsoluteError} class in \textit{Keras}: \url{https://keras.io/api/losses/regression_losses}}, with the addition of the $R_{uv} / \overline{R}$ weights. 

\begin{figure*}[t]
	\centering 
	\includegraphics[width=1.7\columnwidth]{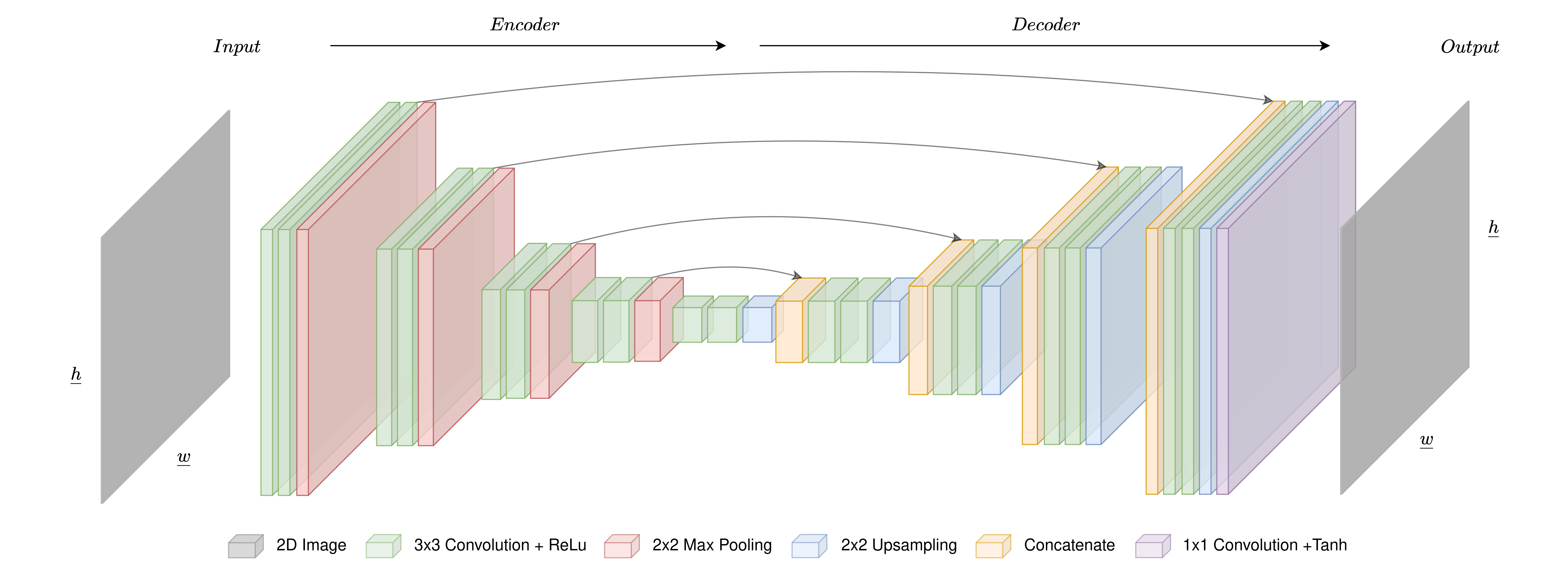}
	\caption{Architecture of the Generator Networks $\mathbf{f}_\mathrm{C}$ and $\mathbf{f}_\mathrm{DC}$.}
	\label{fig:AEarch}
    \vspace{-0.5cm}
\end{figure*}

We have realized experimentally that $\mathbf{f}_\mathrm{C}$ is not very accurate in predicting changes in the areas of $\underline{\mathbf{I}}$ the least affected by the action. In the areas the most affected by the action, the output of $\mathbf{f}_\mathrm{C}$ is accurate, apart from a slight difference in its mean pixel value. To solve both issues, we have added a \textit{refining step} (described below) to the output of the generator $\mathbf{f}_\mathrm{C}$. 

We start by building a boolean matrix $\underline{\mathbf{B}}$ by applying Otsu thresholding~\cite{otsu} followed by morphological erosion, to the dataset average image difference, $\Delta \underline{\mathbf{I}}$ defined in~(\ref{eq:DiffImage}). Then:
\begin{itemize}
    \item 
    pixels which are the least modified by the action are null in $\underline{\mathbf{B}}$; we set their values to those of the corresponding pixels in image $\underline{\mathbf{I}} + \Delta \underline{\mathbf{I}}$.
    \item
    pixels which are the most modified by the action are unitary in $\underline{\mathbf{B}}$; we shift their values, output by $\mathbf{f}_\mathrm{C}$, by an offset $\mu_\mathrm{C}$, such that their mean value is identical to that of the same pixels in $\underline{\mathbf{I}} + \Delta \underline{\mathbf{I}}$.
\end{itemize}
In summary:
\begin{equation}
\begin{array}{ll}
\mathbf{f}_\mathrm{CR}: & \mathcal{U}^{\underline{w}\times \underline{h}} \rightarrow \mathcal{U}^{\underline{w}\times \underline{h}} \\
& \underline{\mathbf{I}}  \mapsto
\underline{\mathbf{B}} \circ \left( \mathbf{f}_\mathrm{C} \left( \underline{\mathbf{I}}\right) + \mu_\mathrm{C} \underline{\mathbf{J}} \right) + \\
& \hspace{0.7cm} + \left( \underline{\mathbf{J}} - \underline{\mathbf{B}} \right) \circ \left( \underline{\mathbf{I}} + \Delta \underline{\mathbf{I}} \right),
\end{array}
\label{eq:RCoutput}
\end{equation}
with $\underline{\mathbf{J}}$ a ($\underline{w}$ x $\underline{h}$) matrix of ones, and $\mu_\mathrm{C} \in \left[ -1, 1\right]$ the mean of the non-zero pixels of $\underline{\mathbf{B}} \circ \left( \underline{\mathbf{I}} + \Delta \underline{\mathbf{I}} - \mathbf{f}_\mathrm{C} \left( \underline{\mathbf{I}}\right) \right)$.






\subsubsection{Difference compensation with Refined Generator}

The difference compensation function $\mathbf{f}_\mathrm{D}$ has the advantage of not requiring a training process. A drawback is that adding an image average introduces noise (blur) on $\underline{\mathbf{I}}$. The authors of~\cite{Jain2008NaturalID} have shown that to denoise images, convolutional autoencoders, which are simpler architectures than our generator, outperform  wavelets and Markov random fields on small samples of training images. Therefore, our third approach consists in passing $\underline{\mathbf{I}} + \Delta \underline{\mathbf{I}}$ to a new \textit{denoising} generator, denoted $\mathbf{f}_\mathrm{DC}$, and then refining the output, as in~(\ref{eq:RCoutput}). We have designed $\mathbf{f}_\mathrm{DC}$ with the same architecture as $\mathbf{f}_\mathrm{C}$. Yet, the decoder in $\mathbf{f}_\mathrm{DC}$ should map the latent space to a denoised version of its input. Therefore, we train this network with the prior images of ${\mathcal{G}}$ with $\Delta \underline{\mathbf{I}}$ added to them. The loss function of $\mathbf{f}_\mathrm{DC}$ is the distance between its output and the corresponding posterior ROI, $\underline{\mathbf{I}}_\mathbf{a}$ (from ${\mathcal{G}}$):
\begin{equation}
	d \left( \underline{\mathbf{I}}_\mathbf{a} , \mathbf{f}_\mathrm{DC} \left( \underline{\mathbf{I}} + \Delta \underline{\mathbf{I}} \right) \right) = \left|  \underline{\mathbf{I}}_\mathbf{a} - \mathbf{f}_\mathrm{DC} \left( \underline{\mathbf{Ia}} + \Delta \underline{\mathbf{I}} \right) \right|.
	\label{eq:lossD}
\end{equation}

The images output by $\mathbf{f}_\mathrm{DC}$ have the same issues seen for $\mathbf{f}_\mathrm{C}$. Therefore, we apply a similar refinement step with boolean matrix $\underline{\mathbf{B}}$, and design $\mathbf{f}_\mathrm{DCR}$ as:
\begin{equation}
\begin{array}{ll}
\mathbf{f}_\mathrm{DCR}: & \mathcal{U}^{\underline{w}\times \underline{h}} \rightarrow \mathcal{U}^{\underline{w}\times \underline{h}} \\
& \underline{\mathbf{I}}  \mapsto
\underline{\mathbf{B}} \circ \left( \mathbf{f}_\mathrm{DC} \left( \underline{\mathbf{I}} + \Delta \underline{\mathbf{I}}\right) + \mu_\mathrm{D} \underline{\mathbf{J}} \right) +\\
& \hspace{0.7cm} + \left( \underline{\mathbf{J}} - \underline{\mathbf{B}} \right) \circ \left( \underline{\mathbf{I}} + \Delta \underline{\mathbf{I}} \right),
\end{array}
\label{eq:DRCoutput}
\end{equation}
with $\mu_\mathrm{D} \in \left[ -1, 1\right]$ the mean of the non-zero pixels of $\underline{\mathbf{B}} \circ \left( \underline{\mathbf{I}} + \Delta \underline{\mathbf{I}} - \mathbf{f}_\mathrm{DC} \left( \underline{\mathbf{I}} + \Delta \underline{\mathbf{I}}\right) \right)$.

We have presented three alternative functions ${\mathbf{f}}$ for predicting the effect of an action on a region of interest $\underline{\mathbf{I}}$: $\mathbf{f}_\mathrm{D}$, $\mathbf{f}_\mathrm{CR}$, and $\mathbf{f}_\mathrm{DCR}$. Having chosen one of the three (either~(\ref{eq:Doutput}),~(\ref{eq:RCoutput}) or~(\ref{eq:DRCoutput})), we can replace the resulting (\textit{predicted}) submatrix in the original image $\mathbf{I}$, to obtain:
\begin{equation}
{\mathbf{I}}_{\mathbf{a}^{ij}} = \left(
\begin{array}{ccccccc}
I_{11} & \cdots & \cdots & \cdots & \cdots & \cdots & I_{1w} \\
\cdots & \cdots & I^{i}_{\underline{u}\hspace{0.3mm}\underline{v}} & \cdots & I^{i}_{\overline{u}\underline{v}} & \cdots & \cdots \\
\vdots  & \vdots  & \vdots  & \ddots & \vdots & \vdots  & \vdots\\
\cdots & \cdots & I^{i}_{\underline{u}\overline{v}} & \cdots & I^{i}_{\overline{u}\hspace{0.3mm}\overline{v}} & \cdots & \cdots \\
I_{h1} & \cdots & \cdots & \cdots & \cdots & \cdots & I_{hw} \\
\end{array}
\right).
\label{eq:predStep3}
\end{equation} 
Note that we have removed from the second term the $j$ dependency (pedix), which appeared in~(\ref{eq:predStep2}) and which is irrelevant since actions are position invariant.


\section{Finding the best sequence of actions for point cloud shaping}\label{sect:treeSearch}

In this section, we explain how to compute the sequence of actions $\mathbf{s}_*= \left\{ \mathbf{a}_1, \dots, \mathbf{a}_K \right\}$ for \textit{point cloud shaping}, i.e., for modifying the environment from initial point cloud $\mathcal{P}_{0}$ to final desired point cloud $\mathcal{P}_*$. 

\begin{figure*}[t]
	\centering 
	\includegraphics[width=0.95\textwidth]{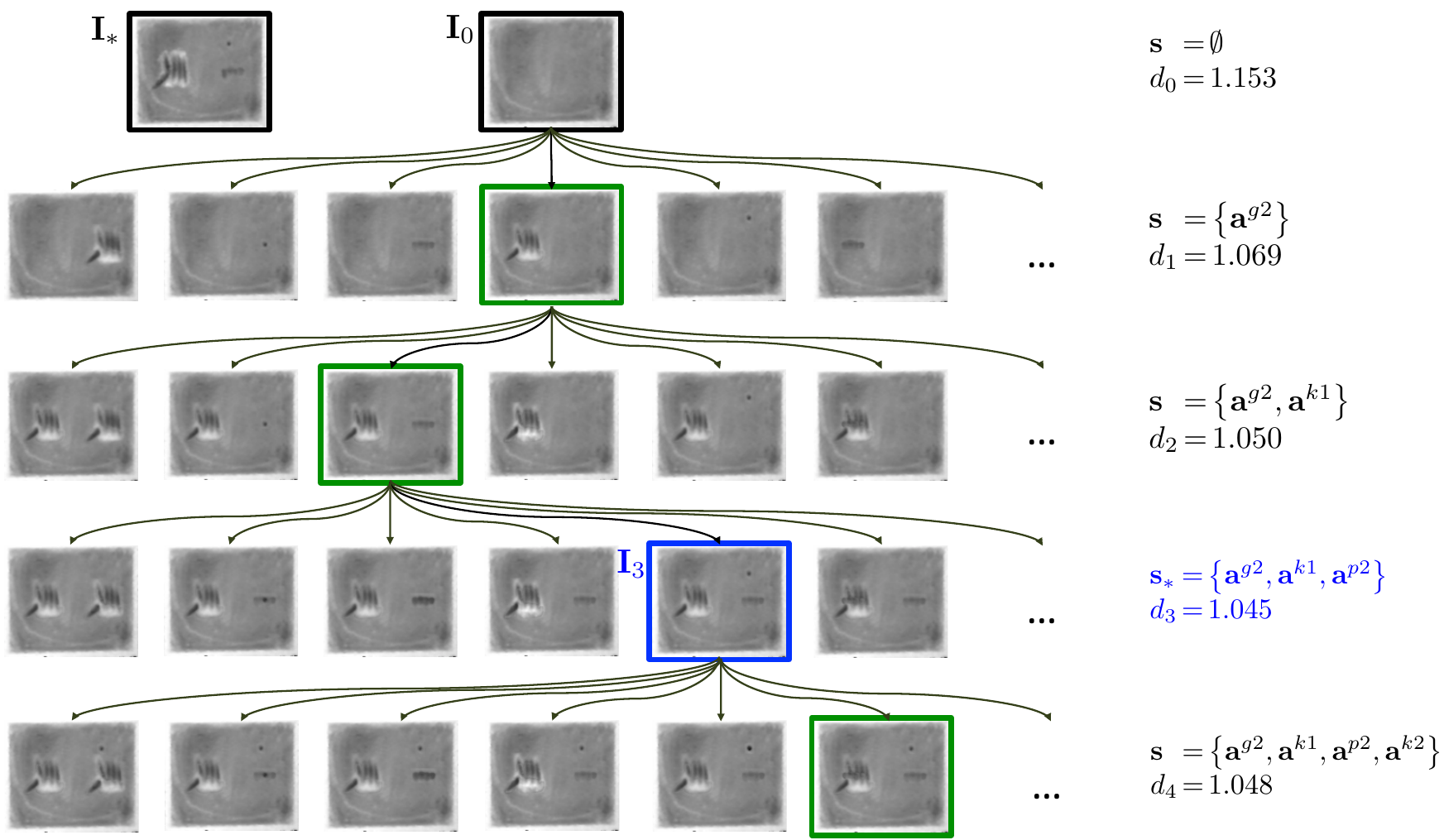}
	\caption{Forward search in the tree of posterior images, based on \textbf{Algorithm 1}. Given initial image $\mathbf{I}_0$ and desired image $\mathbf{I}_*$, at each step the sequence $\mathbf{s}$ is updated with the action which yields the image (green) that is the closest (has smallest distance $d$) to $\mathbf{I}_*$. In this example, the search stops after $3$ steps with the sequence $\mathbf{s}_*$ in blue, since any fourth action would increase the distance ($d_4 > d_3$). Note the similarity between the resulting image ($\mathbf{I}_3$ in blue) and $\mathbf{I}_*$. For the sake of clarity, at each step we only show $6$ of the total $\dim \mathcal{A}$ posterior images. Distances are expressed in mm.}
	\label{fig.treeImg}
    \vspace{-0.3cm}
\end{figure*}

By mapping point clouds to images with~(\ref{eq:PtoI}), we can instantiate this path planning problem in image space, i.e., find $\mathbf{s}$ which modifies the environment from $\mathbf{L}_0$ to $\mathbf{L}_*$ (i.e., from the depth image of $\mathcal{P}_0$ to that of $\mathcal{P}_*$). Furthermore, to express distance $d$ according to~(\ref{eq:distNewImg}), we will work with weighed normalized images $\mathbf{I}_0$ and $\mathbf{I}_*$ instead of $\mathbf{L}_0$ and $\mathbf{L}_*$ by applying~(\ref{eq:newMat}). Within a set of candidate sequences of actions ${\mathcal{S}}$ each leading from $\mathbf{I}_{0}$ to a different image $\mathbf{I}_{|\mathbf{s}}$, we will choose the one leading to the image that is the closest to $\mathbf{I}_*$:

\begin{equation}
	\mathbf{s}_* = \underset{\mathbf{s} \in {\mathcal{S}}}{\mathrm{argmin}} \;\; d \left( \mathbf{I}_{|\mathbf{s}}, \mathbf{I}_* \right).
\end{equation}

This sequence corresponds to the shortest path from $\mathbf{I}_0$ to $\mathbf{I}_*$, in the tree (shown in Fig.~\ref{fig.treeImg} with an example of forward search) of all possible images generated by all possible actions in ${\mathcal{A}}$. Finding the shortest path in a tree is a classic motion planning problem, which can be solved using many state of art algorithms (A$^*$, RRT, etc) out of scope here. We leave to path planning experts the pleasure of proposing the best among such algorithms. Since the focus of our paper is \textit{on how to build the tree of posterior images}, we propose the following forward search algorithm (refer to \textbf{Algorithm 1}):
\begin{enumerate}
	\item 
	at each step $k$, generate a tree from $\mathbf{I}_k$ to all possible (i.e., obtained after executing any action in ${\mathcal{A}}$) posterior images $\mathbf{I}_{k+1}$;
	\item
	update the sequence of actions with the action that yields the image (among all $\mathbf{I}_{k+1}$) which is the closest to $\mathbf{I}_*$ and set this image as the $\mathbf{I}_k$ for the following step; 
	\item
	loop until none of the posterior images $\mathbf{I}_{k+1}$ is closer to $\mathbf{I}_*$ than $\mathbf{I}_{k}$ -- in other words, loop while~(\ref{eq:errorDecrease}) is verified;
	\item
	output the action sequence $\mathbf{s}_*$.
\end{enumerate}
The most challenging part of the algorithm is the prediction of the effect of action $\mathbf{a}^i$ on the image (lines 6 to 8). This requires an appropriate choice of $\mathbf{f}$, among the three implementations proposed in Sec.~\ref{sec:predPCloud}.

\begin{algorithm}
	\caption{Action sequence planner}
	\begin{algorithmic}[1] 
		\renewcommand{\algorithmicrequire}{\textbf{Input:}}
		\renewcommand{\algorithmicensure}{\textbf{Output:}}
		\REQUIRE Initial image $\mathbf{L}_0$ and final desired image $\mathbf{L}_*$.
		\ENSURE Sequence of actions $\mathbf{s}_* =  \left\{ \mathbf{a}_1, \dots, \mathbf{a}_K \right\}$.
		\STATE initialize $k = 0$, $\mathbf{s} = \emptyset$
		\STATE apply~(\ref{eq:newMat}) to obtain $\mathbf{I}_{0|\emptyset}$ from $\mathbf{L}_{0}$ and $\mathbf{I}_*$ from $\mathbf{L}_*$
		\STATE compute $d_0 = d\left(\mathbf{I}_{0|\emptyset}, \mathbf{I}_* \right)$ with~(\ref{eq:distNewImg})
		\LOOP
		\FOR {each action $\mathbf{a}^{ij} \in {\mathcal{A}}$} 
		\STATE extract submatrix $\underline{\mathbf{I}}$ from $\mathbf{I}_{k|\mathbf{s}}$ via~(\ref{eq:predStep1})
		\STATE using~(\ref{eq:lastMapping}) predict effect of $\mathbf{a}^{ij}$ on $\underline{\mathbf{I}}$: $\underline{\mathbf{I}}_{\mathbf{a}^{ij}}$ 
		\STATE with~(\ref{eq:predStep3}) inject $\underline{\mathbf{I}}_{\mathbf{a}^{ij}}$ in $\mathbf{I}_{k|\mathbf{s}}$ to get $\mathbf{I}_{k+1|\left\{ {\mathbf{s}}, \mathbf{a}^{ij} \right\}}$
		\STATE compute $d_{ij} = d\left(\mathbf{I}_{k+1|\left\{ {\mathbf{s}}, \mathbf{a}^{ij} \right\}}, \mathbf{I}_* \right)$ with~(\ref{eq:distNewImg})
		\ENDFOR
		\IF {$ \exists \; d_{ij} < d_k$}
		\STATE find best action $\mathbf{a}_k = \underset{\mathbf{a}^{ij} \in {\mathcal{A}}}{\mathrm{argmin}}\; \; d_{ij}$ 
		\STATE update sequence ${\mathbf{s}} = \left\{ {\mathbf{s}}, \mathbf{a}_k \right\}$ 
		\STATE $k = k+1$
		\STATE $d_k = d\left(\mathbf{I}_{k| {\mathbf{s}} }, \mathbf{I}_* \right)$
		\ELSE
		\RETURN $\mathbf{s}_* = \mathbf{s}$
		\ENDIF
		\ENDLOOP
	\end{algorithmic}
\end{algorithm}

\subsection{Reinsertion of the ROI}

Interested readers may find more advanced methods (often based on motion planning or on machine learning) in the literature on automatic action selection. 
A planning method for finding intermediate states to shape a flexible wire is presented in~\cite{KaMo:06}. The authors of~\cite{dulac2015deep} use deep reinforcement learning to choose within a finite number of actions, whereas~\cite{laskey2017learning} relies on imitation learning to choose actions for cloth manipulating.

\section{Experiments}

In this section, we present the experiments that we did to assess our framework. The software and a video of the experiments are available online at: \url{https://github.com/gursoyege/lirmm-farine}. The video is also attached to the paper and available at \url{https://youtu.be/fLGf7c3cbKk}.

\subsection{Setup}

Figure~\ref{fig.bazar} shows our experimental setup. We focus on flour manipulation, with the three actions grasp, knock and poke shown -- along with their parameters -- in Table~\ref{table:Act}. We did not apply actions push and pinch, since the robot actuation restrains it from realizing these actions. The three actions can be applied in $N=15$ positions; hence, for these experiments, the total number of actions is $\dim \mathcal{A} = 3 N = 45$. Flour is placed within a custom-made wooden box of size $400$ x $500$ mm. We use the right arm (KUKA LWR 4+) and Shadow Dexterous Hand of the BAZAR robot~\cite{j20} with an Intel Realsense D435 depth camera rigidly linked to BAZAR's right wrist. The camera parameters obtained after calibration are $u_0=421$, $v_0=243$, $f_x=433$, $f_y=433$. Although the image resolution is $848 \times 480$, to calculate $d$ we have focused on the region of interest containing the box and flour, which has size $480 \times 395$. 



The actions are realized using the RKCL framework\footnote{\url{https://gite.lirmm.fr/rkcl/rkcl-core}}. This C++ library allows to quickly set up a kinematic controller for any kind of single and multi robot structures, as long as the resulting system has a kinematic tree structure. With RKCL, the BAZAR components used in this experiment (one KUKA LWR 4+ arm and the attached Shadow hand) can be controlled in the operational or in the joint space as a unique entity. 
To this end, RKCL integrates a Quadratic Programming controller for tracking a desired pose with the robot tool, while respecting some constraints (e.g., joint position/velocity/acceleration limits, restrictions in the Cartesian space to avoid collisions, tool velocity/acceleration). 


\begin{figure}[t]
	\centering {\centering\includegraphics[width=0.85\columnwidth]{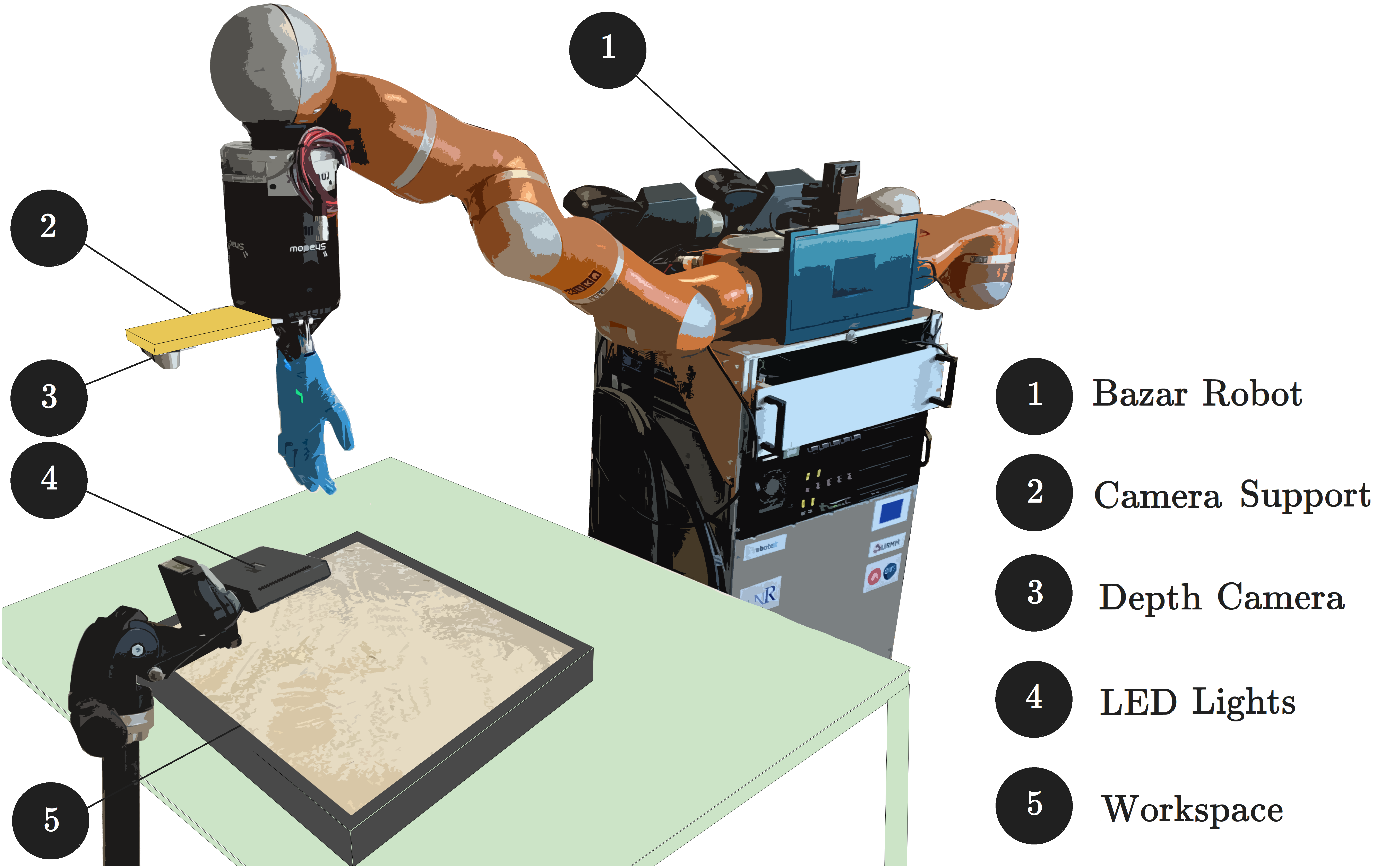}}
	\caption{Experimental Setup for robotic flour shaping.}
	\label{fig.bazar}
    \vspace{-0.2cm}
\end{figure}


\begin{figure*}[!h]
	\centering
	\begin{subfigure}[b]{0.16\textwidth}
		\label{fig_grasp1}
		\includegraphics[width=\textwidth]{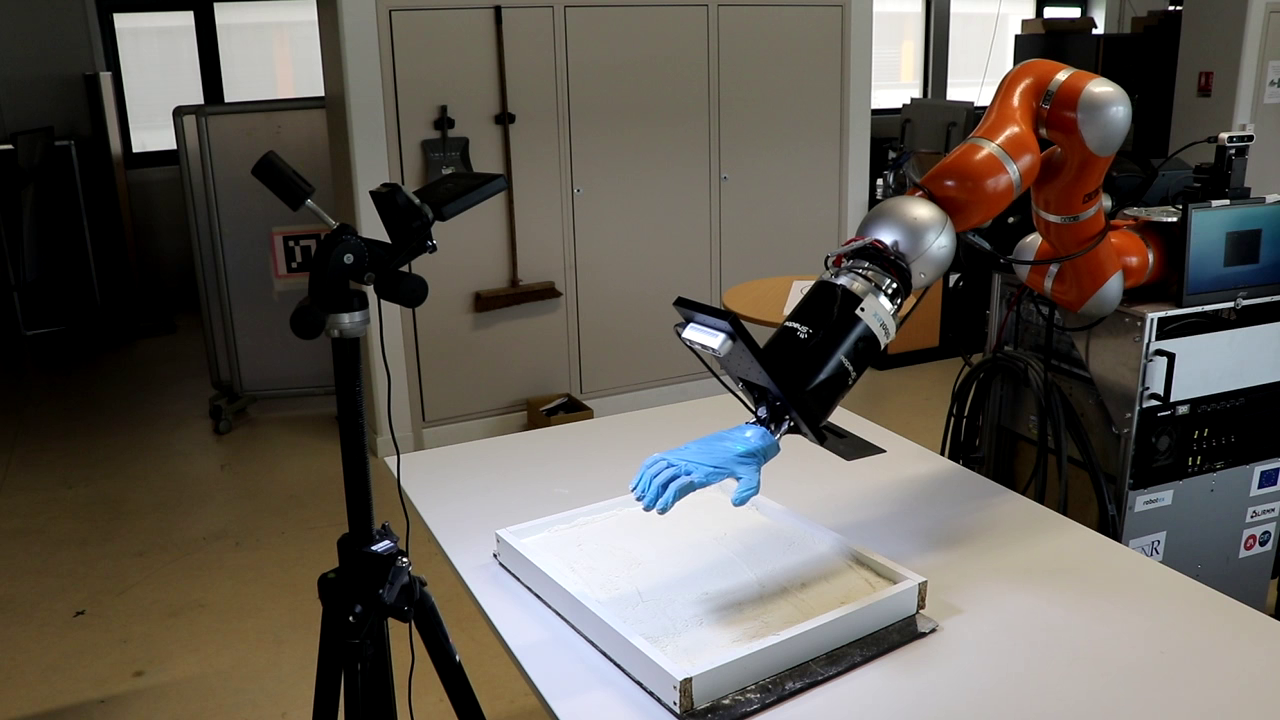}
	\end{subfigure}
	\hfill
	\begin{subfigure}[b]{0.16\textwidth}
		\label{fig_grasp2}
		\includegraphics[width=\textwidth]{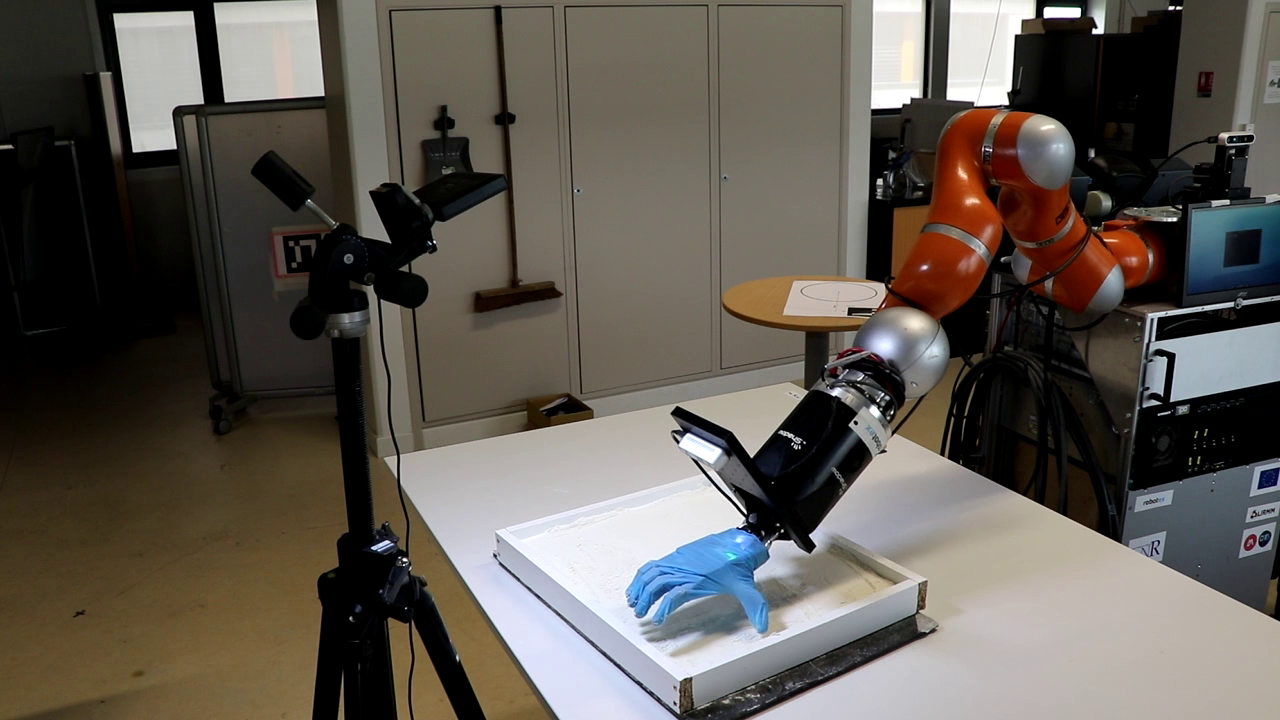}
	\end{subfigure}
	\hfill
	\begin{subfigure}[b]{0.16\textwidth}
		\label{fig_grasp3}
		\includegraphics[width=\textwidth]{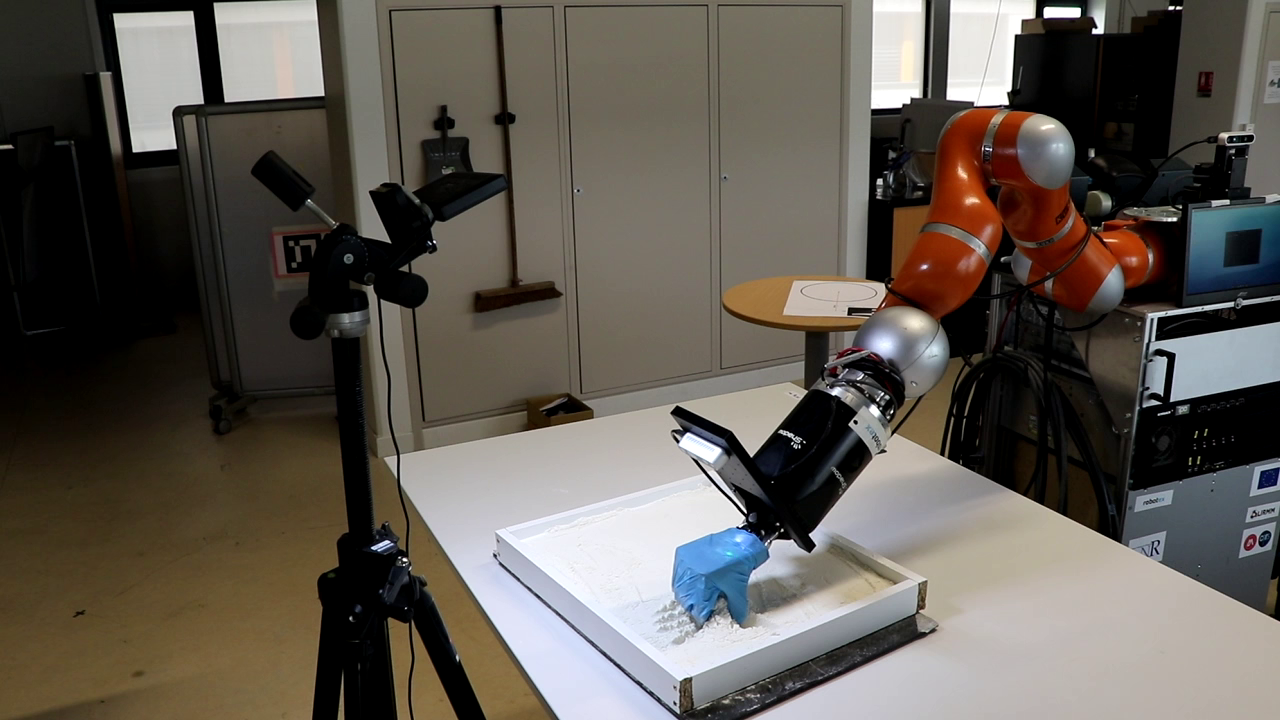}
	\end{subfigure}
	\hfill
	\begin{subfigure}[b]{0.16\textwidth}
		\label{fig_grasp4}
		\includegraphics[width=\textwidth]{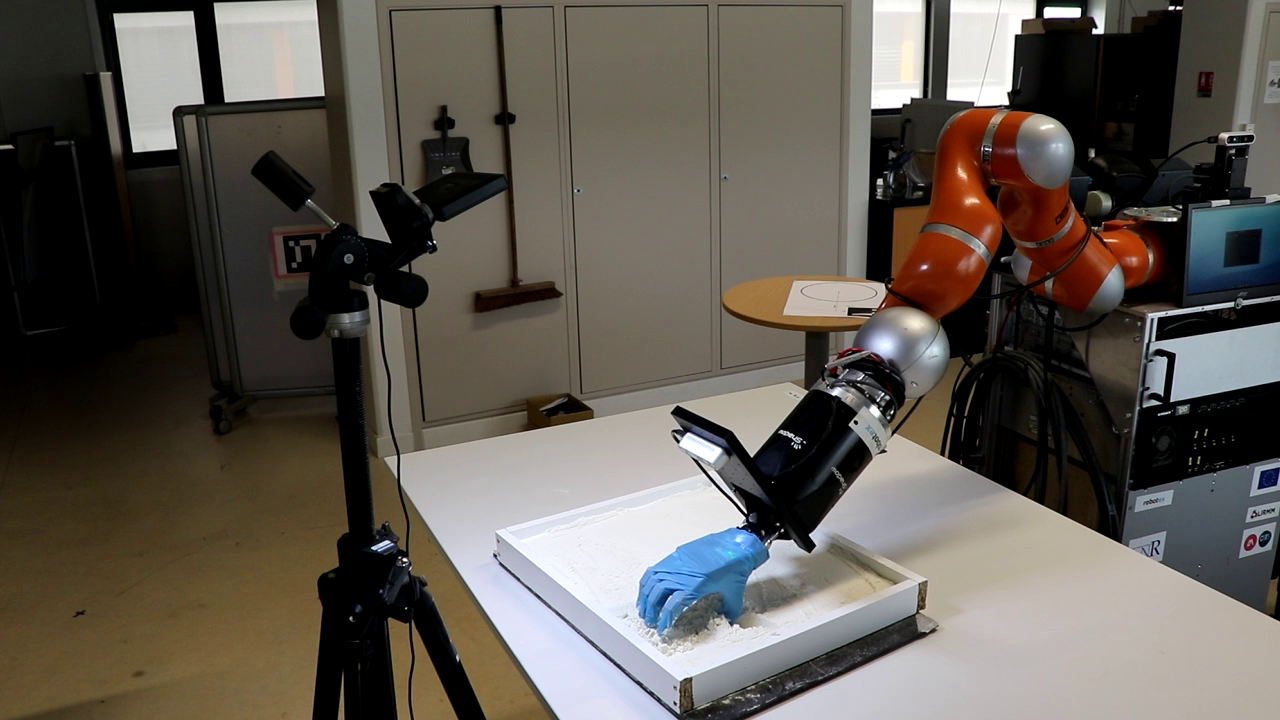}
	\end{subfigure}
	\hfill
	\begin{subfigure}[b]{0.16\textwidth}
		\label{fig_grasp5}
		\includegraphics[width=\textwidth]{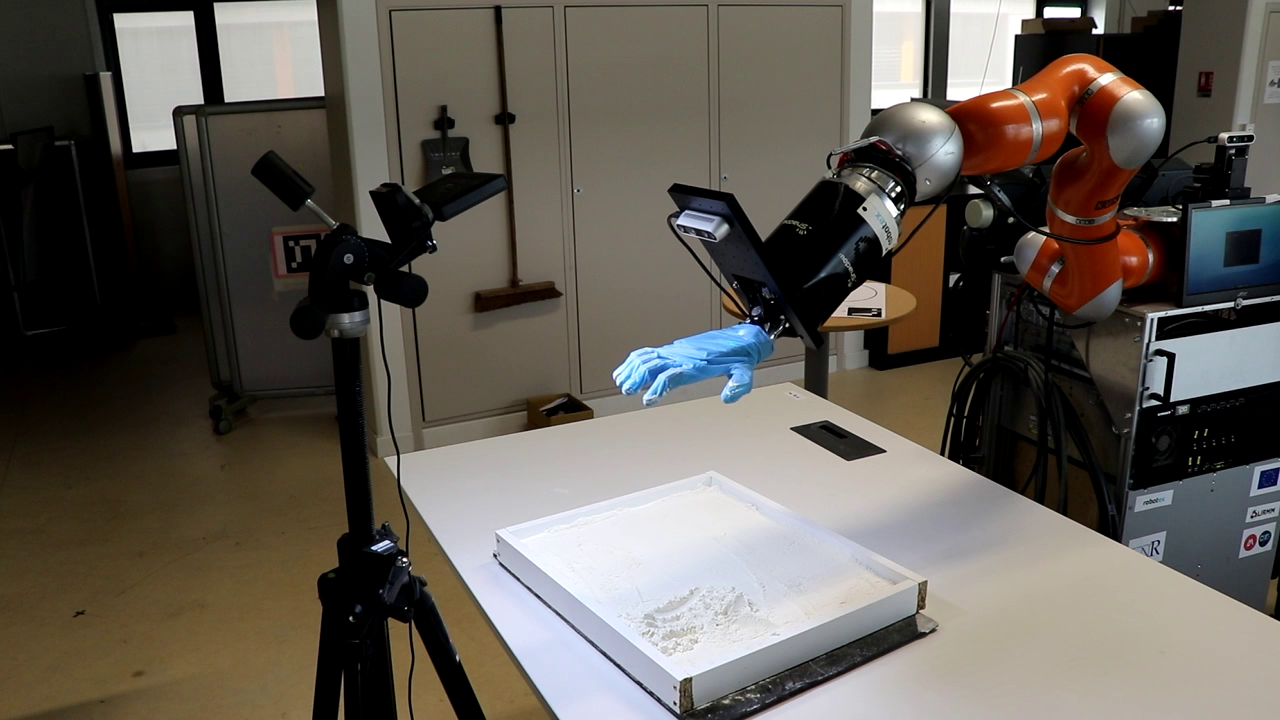}
	\end{subfigure}
	\hfill
	\begin{subfigure}[b]{0.16\textwidth}
		\label{fig_grasp6}
		\includegraphics[width=0.95\textwidth]{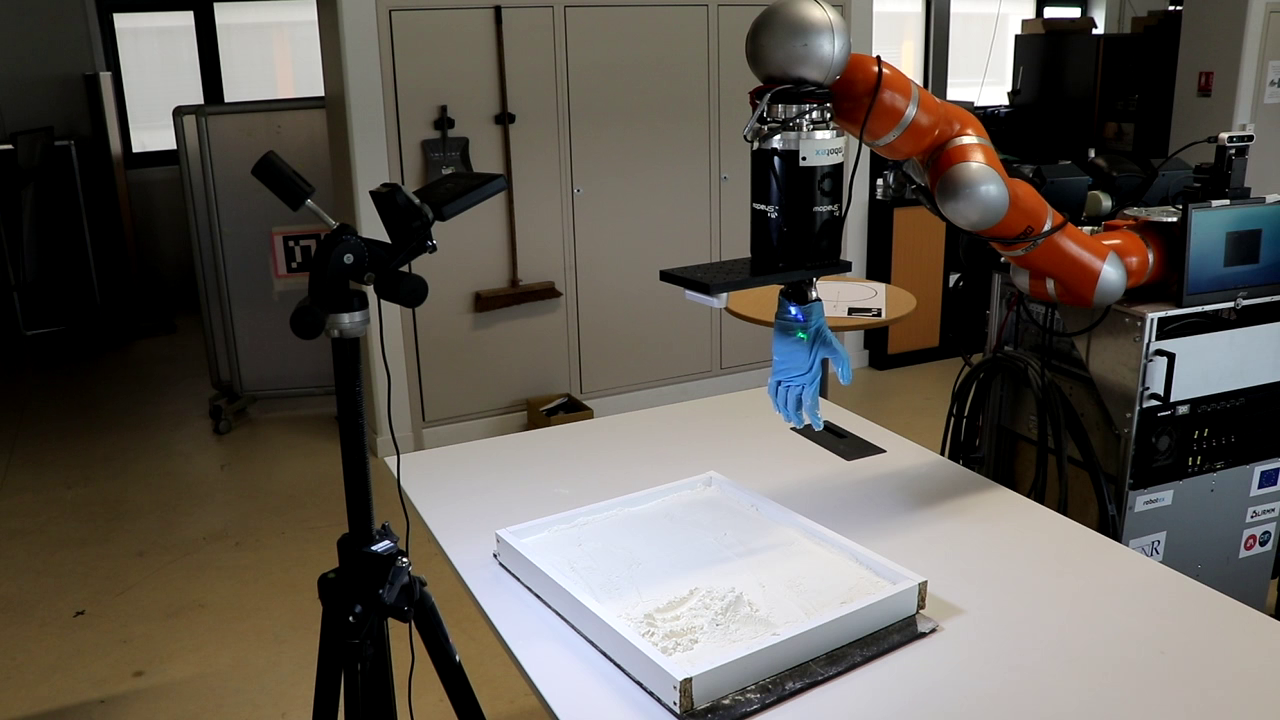}
	\end{subfigure}
	\caption{Six snapshots taken during the grasp action. Left to right: the arm moves above the desired location; the arm goes down, while the hand opens; the hand closes to grasp some flour; the hand opens to release the flour; the arm goes up; the arm returns to the starting position, to free the camera field of view.}
	\label{fig_xp_snap}
    \vspace{-0.2cm}
\end{figure*}

The grasping action is depicted in Fig.~\ref{fig_xp_snap}: 
the robot performs six successive operations to accomplish the whole action. A similar approach has been taken for the two other actions. To safely manage contact with the environment, we monitor the KUKA LWR's joint torques and switch to the next operation if they pass a given threshold. After each action, the arm returns to a fixed pose such that the camera is perpendicular to the workspace. There, the Realsense captures a depth image, at a distance of approximately $450$ mm from the flour. This value corresponds to the depth of all actions' positions: $Z = 450$ mm. Since it is much greater than the depth of the three bounding boxes, $\Delta Z = 50$ mm (see Table~\ref{table:Act}), \underline{Hypothesis 3} is valid.


\begin{figure}[t!]
    \vspace{-0.2cm}
	\centering
	\begin{subfigure}[b]{0.24\textwidth}
		\centering
		\includegraphics[width=\textwidth]{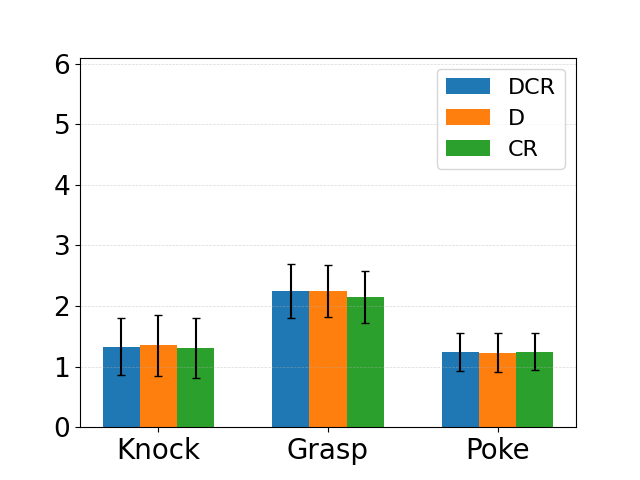}
		\caption{\small Train on R and test on R.}    
		\label{fig:RonR}
  \vspace{-0.4cm}
	\end{subfigure}
	\hfill
	\begin{subfigure}[b]{0.24\textwidth}  
		\centering 
		\includegraphics[width=\textwidth]{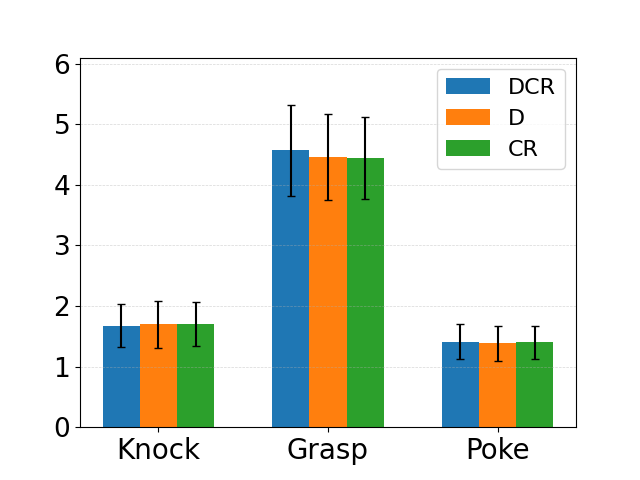}
		\caption{\small Train on R and test on H.}    
		\label{fig:RonH}
   \vspace{-0.4cm}
	\end{subfigure}
	\vskip\baselineskip

	\begin{subfigure}[b]{0.24\textwidth}   
		\centering 
		\includegraphics[width=\textwidth]{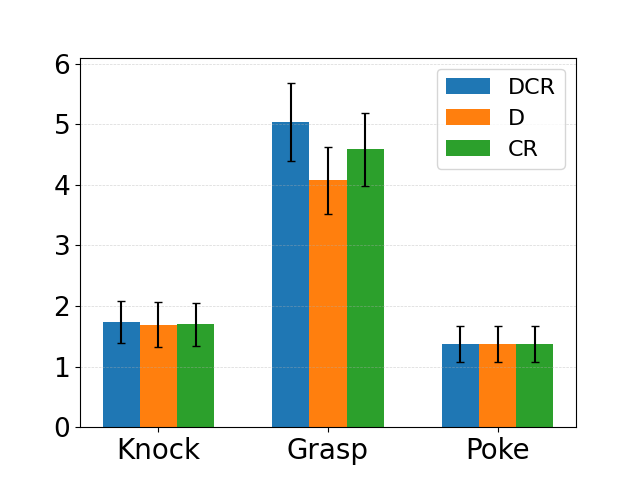}
		\caption{\small Train on H and test on R.}   
		\label{fig:HonR}
	\end{subfigure}
	\hfill
	\begin{subfigure}[b]{0.24\textwidth}   
		\centering 
		\includegraphics[width=\textwidth]{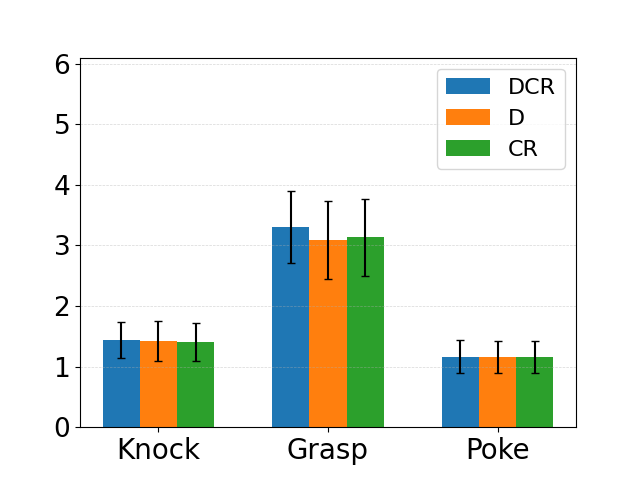}
		\caption{\small Train on H and test on H.}    
		\label{fig:HonH}
	\end{subfigure}
	\caption {Mean and standard deviation of the distances (in mm) between the predicted and true ROI, when using $\mathbf{f}_\mathrm{D}$ (orange), $\mathbf{f}_\mathrm{CR}$ (green) or $\mathbf{f}_\mathrm{DCR}$ (blue). The four graphs are obtained by training and testing on either human or robot data (denoted respectively H and R).} 
	\label{fig.errBars}
    \vspace{-0.4cm}
\end{figure}

We record sequences of depth images $\mathbf{L}$ of the flour, while the actions are applied by either BAZAR or a human. Images are acquired by Intel RealSense SDK 2.0, with temporal filtering and hole filling enabled\footnote{See post-processing filters in Intel RealSense SDK 2.0: \url{https://github.com/IntelRealSense/librealsense}}. We remove all images where either the robot or human are occluding the environment (typically, while the robot/human is moving to execute the action). We apply~(\ref{eq:newMat}) to convert each image $\mathbf{L}$ into $\mathbf{I}$. We divide the recorded images into $6$ datasets (one for each of the $3$ actions, applied by either robot or human). All datasets are publicly available at the following link: \url{https://seafile.lirmm.fr/d/bc9b413e83184505aa7e/}. Each dataset contains $100$ pairs of prior/posterior images, representing the flour before/after the action. Then, we manually annotate, in all these images, the ROI which contains a ``visible'' (by human standards) difference in depth\footnote{The annotation could be automated using standard image processing techniques, similar to the ones we developed in~\cite{cherubini:ijrr:2020}.}. For each action, we fix the same ROI sizes for the human and robot, so that the generators can be trained/tested across agents. The sizes $( \underline{w}, \underline{h})$ of the grasp, knock and poke ROI are respectively: $(183,183)$, $(112,60)$, $(40,45)$; examples of these ROI are also shown in Table~\ref{table:Act}.

To implement both generators ${\mathbf{f}}_\mathrm{C}$ and ${\mathbf{f}}_\mathrm{DC}$, we use Tensorflow 2 \footnote{\url{https://www.tensorflow.org}} on a Dell Precision 7550 laptop (Intel Xeon W-10885M CPU, 64GB RAM, NVIDIA RTX 5000). We train both networks with the Adam optimizer~\cite{adamoptimizer}, setting $\alpha=0.0002$, $\beta_1=0.5$, $\beta_2=0.999$ and 
$\epsilon=10^{-8}$. 

\subsection{Results of Image Prediction}
\label{Sec.ResultsImagePrediction}

First, to assess our distance $d$, we have compared the time needed to compute the distance between a pair of depth images, using: our distance, Chamfer distance and Earth Mover's Distance (EMD). The results (on Dell Precision 7550 mentioned above) are: our distance takes $17$ ms, Chamfer $153$ ms and EMD $157$ ms\footnote{We used the following implementations: \url{https://www.tensorflow.org/graphics/api_docs/python/tfg/nn/loss/chamfer_distance} and \url{https://en.wikipedia.org/wiki/Earth_mover's_distance##PyTorch_Implementation}, for Chamfer and EMD (respectively).}. 
These results confirm our choice. It should also be noted that both Chamfer and EMD rely on point clouds, with the drawbacks on memory usage mentioned above.

Then, we realized a series of experiments to assess the functions ${\mathbf{f}}_\mathrm{D}$, ${\mathbf{f}}_\mathrm{CR}$ and ${\mathbf{f}}_\mathrm{DCR}$. We split each of the $6$ datasets into training+validation ($75 \%$) and testing ($25 \%$) subsets. For cross-validation, the partitions are done four times, each time selecting randomly -- as test subset -- a different $25 \%$ of the whole dataset. The labeled (training+validation) set ${\mathcal{G}}$ is used to: train generators ${\mathbf{f}}_\mathrm{C}$ and ${\mathbf{f}}_\mathrm{DC}$, find the average difference $\Delta \underline{\mathbf{I}}$, boolean matrix $\underline{\mathbf{B}}$, and mean offsets $\mu_\mathrm{C}$ and $\mu_\mathrm{D}$. We train a different generator: for each action (three), agent (two: human or robot), task (two: predicting with ${\mathbf{f}}_\mathrm{C}$ or denoising with ${\mathbf{f}}_\mathrm{DC}$), and partition (four) adding up to a total 48 networks. For each network, we set the training duration to 500 epochs (approximately 1 hour).

Figure~\ref{fig.errBars} shows the distances (in mm) between the predicted and true ROI (mean and standard deviation of the 4 tests) for each action and for each choice of ${\mathbf{f}}$ (either ${\mathbf{f}}_\mathrm{D}$, ${\mathbf{f}}_\mathrm{CR}$ or ${\mathbf{f}}_\mathrm{DCR}$). The distances are obtained by expressing distance $d$ according to~(\ref{eq:distNewImg}), and then converting it to mm. We repeat the measures four times (graphs a-d) by training and testing on either human or robot.

We can draw various conclusions from these graphs. First, testing after having trained with the same agent (Figures~\ref{fig.errBars}a and~\ref{fig.errBars}d) provides lower error than transferring from one agent to the other (Figures~\ref{fig.errBars}b and~\ref{fig.errBars}c); on average, $1.75\pm0.42$ mm versus $2.53\pm0.46$ mm. This is because the images of human and robot actions are quite different, also due to the hand sizes. Yet, the transfer learning error is low enough to encourage some of the real robot tests, which we will present in the next Section. Second, training and testing on robot ($1.59\pm0.41$ mm) outperforms training and testing on human ($1.92\pm0.43$ mm), likely because of the machine's lower variability. Third, independently from the agent and prediction strategy ${\mathbf{f}}$, the error (both mean and standard deviation) tends to increase from poke to knock to grasp. This is because of their difference in terms of repeatability (e.g., the effect of a poke, which involves only one finger, varies less than that of a grasp). Last, but not least, there is no relevant difference between the mean and standard deviation of the three prediction functions, $\mathbf{f}_\mathrm{D}$ (orange), $\mathbf{f}_\mathrm{CR}$ (green) or $\mathbf{f}_\mathrm{DCR}$ (blue). Overall, the three functions perform similarly: $\mathbf{f}_\mathrm{D}$ with error $2.09 \pm 0.44$ mm, $\mathbf{f}_\mathrm{CR}$ with error $2.13 \pm 0.44$ mm, $\mathbf{f}_\mathrm{DCR}$ with $2.21 \pm 0.44$ mm. 

To better discriminate between the three prediction functions, we have run tests with training+validation subsets of $5$, $10$, $25$, $50$ and $75$ images, while keeping the same testing subset of $25$ images. We consider robot data for both training and testing and average over all three actions. The results -- plotted in Fig.~\ref{fig.dataSize} -- show that as the dataset increases, the deep learning methods $\mathbf{f}_\mathrm{DCR}$ and $\mathbf{f}_\mathrm{CR}$ decrease the error faster than the more na\"ive $\mathbf{f}_\mathrm{D}$. This is likely because, with more data, the image output by $\mathbf{f}_\mathrm{D}$ tends to blur, while the generator embedded in $\mathbf{f}_\mathrm{DCR}$ and $\mathbf{f}_\mathrm{CR}$ succeed in capturing the depth image details. 

Finally, we have assessed the importance of the refining step, by comparing the prediction error with and without refinement. More specifically, we have compared the errors of ${\mathbf{f}}_\mathrm{C}$, ${\mathbf{f}}_\mathrm{CR}$, ${\mathbf{f}}_\mathrm{DC}$ and ${\mathbf{f}}_\mathrm{DCR}$ in Table~\ref{table:refin}. The results clearly show the usefulness of the refinement step (values in the third and fifth column are all lower than those in the second and fourth column).

\begin{figure}[h!]
    \vspace{-0.3cm}
	\centering {\centering\includegraphics[trim=0 3cm 0 2cm, clip,width=0.9\columnwidth]{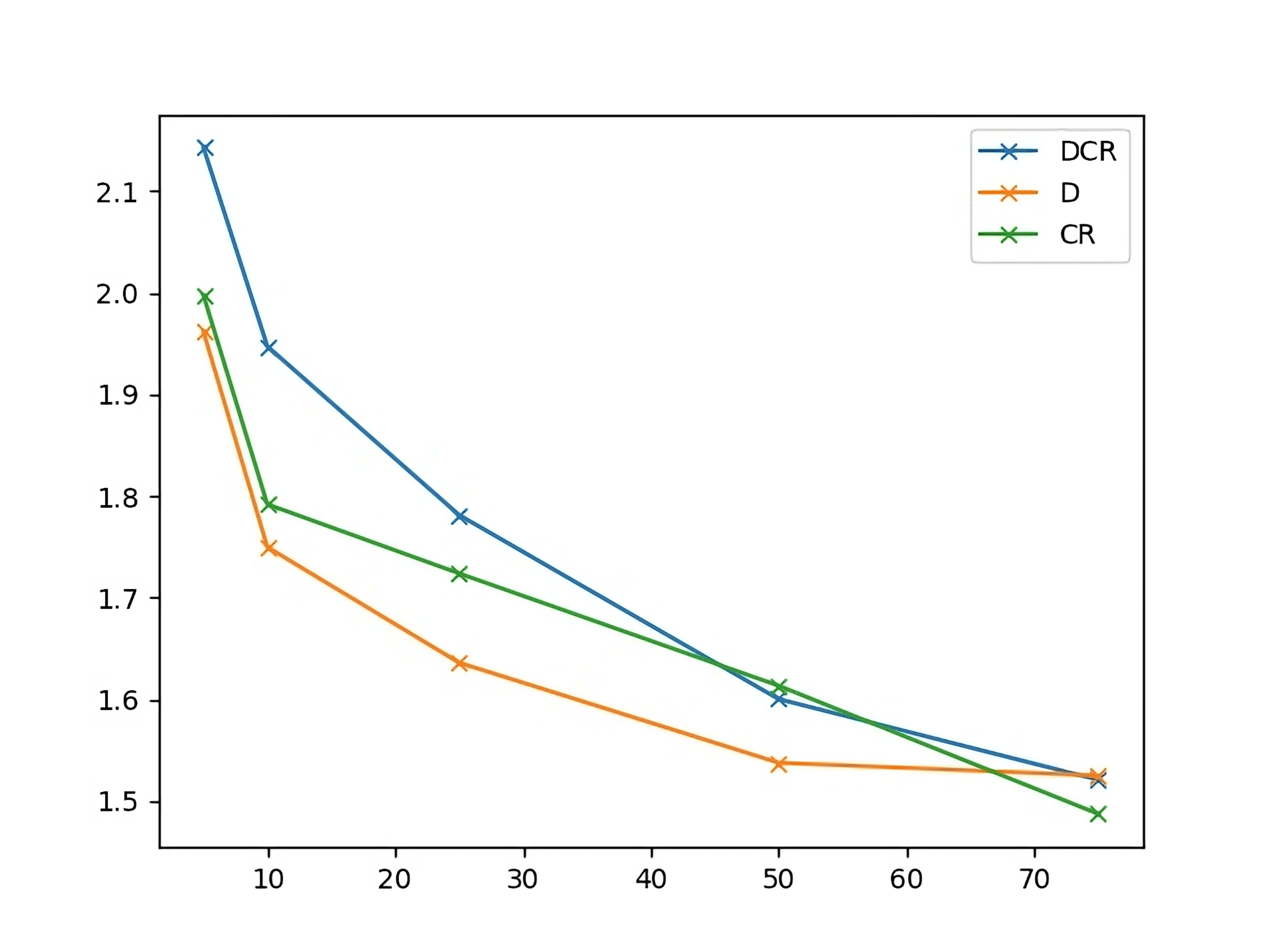}}
	\caption{Mean error (in mm) of $\mathbf{f}_\mathrm{D}$ (orange), $\mathbf{f}_\mathrm{CR}$ (green) or $\mathbf{f}_\mathrm{DCR}$ (blue) over the three actions as the training+validation dataset increases: $5$, $10$, $25$, $50$ and $75$ images.}
	\label{fig.dataSize}
    \vspace{-0.2cm}
\end{figure}

\begin{table}[h!]
	\caption{Mean error (in mm) of ${\mathbf{f}}_\mathrm{C}$, ${\mathbf{f}}_\mathrm{CR}$, ${\mathbf{f}}_\mathrm{DC}$ and ${\mathbf{f}}_\mathrm{DCR}$.} 
	\label{table:refin}
	\centering
	\begin{tabular}{|c|c|c|c|c|}
		\cline{1-5}
\textbf{Datasets} & ${\mathbf{f}}_\mathrm{C}$ & ${\mathbf{f}}_\mathrm{CR}$ & ${\mathbf{f}}_\mathrm{DC}$ & ${\mathbf{f}}_\mathrm{DCR}$\\ 
		\hline
		Train on R and test on R  & 1.65 & 1.56 & 1.73 & 1.60 \\
		\hline
		Train on R and test on H  & 2.68 & 2.51 & 2.80 & 2.55 \\
		\hline
		Train on H and test on R  & 2.70 & 2.55 & 2.95 & 2.71 \\
		\hline
		Train on H and test on H  & 1.96 & 1.90 & 2.11 & 1.97 \\
		\hline
	\end{tabular}
 \vspace{-0.4cm}
\end{table}

\subsection{Robotic point cloud shaping}
\label{sec:expRob}
Since among the three prediction functions, $\mathbf{f}_\mathrm{CR}$ appears to be the most promising as data increases (see Fig.~\ref{fig.dataSize}), we have used it in the following experiments to predict the effect of an action on a region of interest.

We have run two series of experiments: the first four with robot-generated images and the next two with human-generated images. In both series, for each of the three actions, we take the best network $\mathbf{f}_\mathrm{C}$ among the four dataset partitions, and then proceed as follows. Given a pair of images ($\mathbf{L}_0$ of the initial point cloud $\mathcal{P}_0$, and $\mathbf{L}_*$ of the final desired point cloud $\mathcal{P}_*$), we run \textbf{Algorithm 1} (with $\mathbf{f}_\mathrm{CR}$ applied at line 7) to obtain the sequence of actions $\mathbf{s}_*$ for shaping the point cloud from $\mathcal{P}_0$, to $\mathcal{P}_*$. Then, we control BAZAR robot, so that it sequentially executes the actions in $\mathbf{s}_*$, starting from point cloud $\mathcal{P}_0$. The final desired depth images $\mathbf{L}_*$ have been shaped by the robot in the first series of experiments, and by the human in the second series. 

\begin{figure}[b!]
	\centering {\centering\includegraphics[width=0.95\columnwidth]{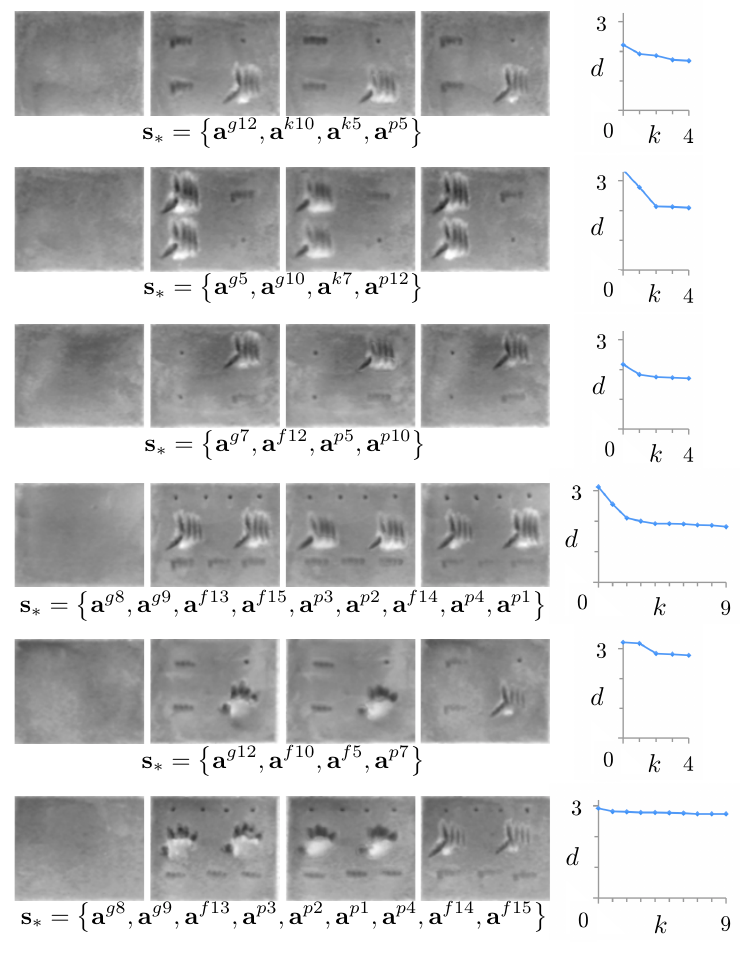}}
	\caption{Six robot experiments, with the neural networks trained on the robot dataset (top four) and on the human dataset (bottom two). For each of the six experiments, we show four images (left to right: initial image $\mathbf{L}_0$, final desired image $\mathbf{L}_*$, predicted image, and image obtained by the robot) as well as the computed sequence $\mathbf{s}_*$ and the evolution of $d$ (in mm) after each action $\mathbf{a}_k$ is applied.}
	\label{fig.robExperiments}
    \vspace{-0.2cm}
\end{figure}

The results are shown in Fig.~\ref{fig.robExperiments}. For each of the six experiments, we show (from left to right) the initial image $\mathbf{L}_0$, the final desired image $\mathbf{L}_*$, the image predicted by \textbf{Algorithm 1}, and the image obtained by the robot; we also indicate the computed sequence $\mathbf{s}_*$ and the evolution of $d$ (in mm) over the experiments. In all six experiments, \textbf{Algorithm 1} retrieves the actions which were applied by the agent to obtain $\mathbf{L}_*$, although not necessarily in the same order. As the robot replays the sequence, the distance diminishes at each step in all experiments (see blue curves). As expected, the final distance is higher in the transfer learning experiments (both dataset and desired image are human-made) than in the entirely robotic experiments. This is due to the difference in shape and size of the human and robot actions, which is visible by comparing the fourth image to the second and third images, in the last two experiments. 


\subsection{Generalizing across actions, materials, and geometry}

Given the limitations of the robot hardware, we decided to run another series of experiments on human data, to validate our methodology in more general situations. In particular, we assessed the following aspects:
\begin{itemize}
\item
applying any among all five actions of Table~\ref{table:Act} (including pinch and press),
\item
shaping new -- non granular -- material with the functions ${\mathbf{f}}_\mathrm{D}$, ${\mathbf{f}}_\mathrm{CR}$ trained on flour; to this end, we used kinetic sand, a plastic toy material that mimics the physical properties of wet sand,
\item
experimenting action superposition (i.e., situations where  multiple actions affected the same region),
\item
shaping non-flat surfaces  (both irregular and of constant, non-null curvature).
\end{itemize}

 \begin{figure}[b!]
	\centering {\centering\includegraphics[width=0.95\columnwidth]{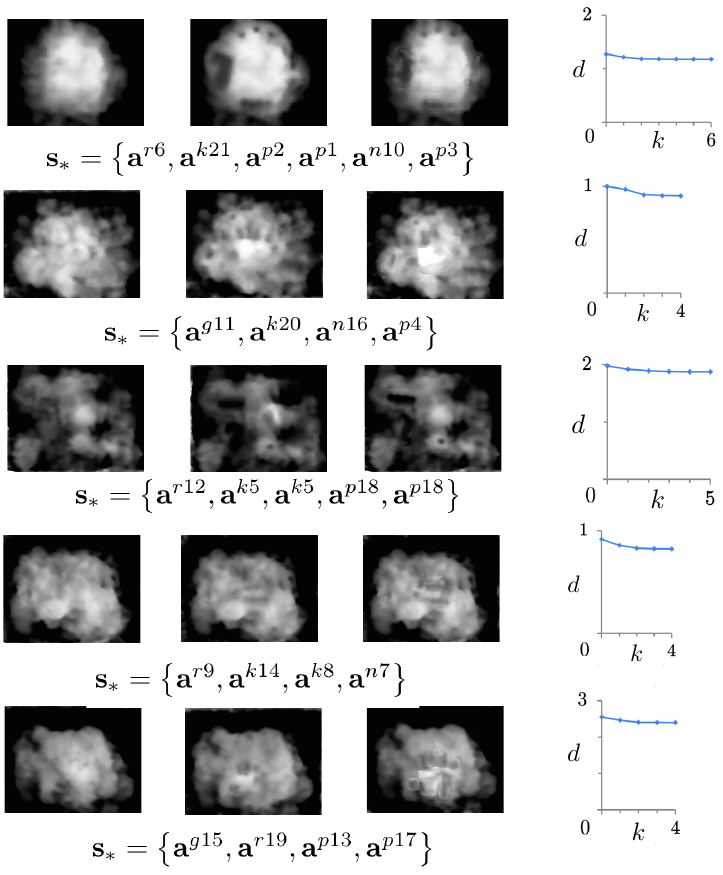}}
	\caption{Five experiments, with the neural networks trained on the human dataset. For each experiment, we show three images (left to right: initial image $\mathbf{L}_0$, final desired image $\mathbf{L}_*$ and predicted image) as well as the computed sequence $\mathbf{s}_*$ and the evolution of $d$ (in mm) after each action $\mathbf{a}_k$ is applied.}
	\label{fig.expRevision}
    \vspace{-0.2cm}
\end{figure}

We have run five experiments on kinetic sand, shown in Fig.~\ref{fig.expRevision}, with $\mathbf{f}_\mathrm{CR}$ as prediction function. The actions can be applied in $N=21$ positions; hence, for these experiments, the total number of actions is $\dim \mathcal{A} = 5 N = 105$. In all experiments and for each of the five actions, we take the best network $\mathbf{f}_\mathrm{C}$ among the four human dataset partitions, and then proceed as follows. Given a pair of images ($\mathbf{L}_0$ of the initial point cloud $\mathcal{P}_0$, and $\mathbf{L}_*$ of the final desired point cloud $\mathcal{P}_*$), we run \textbf{Algorithm 1} (with $\mathbf{f}_\mathrm{CR}$ applied at line 7) to obtain the sequence of actions $\mathbf{s}_*$ for shaping the point cloud from $\mathcal{P}_0$, to $\mathcal{P}_*$. Then, we verify if the predicted image qualitatively resembles the final one, and whether \textbf{Algorithm 1} predicts a sequence of actions similar to the actually applied one. 

For each of the 5 experiments, we show (from left to right) the initial image $\mathbf{L}_0$, the final desired image $\mathbf{L}_*$ and the image predicted by \textbf{Algorithm 1}; we also indicate the computed sequence $\mathbf{s}_*$ and the evolution of $d$ (in mm) over the experiments. 

In the first experiment, the initial shape has a constant curvature of approximately $15$ cm. For the four other experiments, we start with very irregular non-flat shapes (with standard variation of the depth from its mean $\pm 17$ mm -- see first column of Fig.~\ref{fig.expRevision}). In the fourth and fifth experiments, we obtained the desired image by apply overlapping actions.

In four of the five experiments, \textbf{Algorithm 1} retrieves the same type, position and number of actions which the human applied to obtain $\mathbf{L}_*$, although not necessarily in the same order. The only exception is the third experiment. Here, our method predicts two knock and two poke actions at the same position (see $\mathbf{s}_*$ at the third line of Fig.~\ref{fig.expRevision}), whereas the applied sequence was: $\mathbf{s}_* =  \left\{ \mathbf{a}^{r12}, \mathbf{a}^{k5}, \mathbf{a}^{p18} \right\}$. When applying knock and poke to generate $\mathbf{L}_*$, the human has penetrated the material more than usual. On the human dataset, the deepest points generated by knock and poke have depths respectively $6.80$ mm and $7.05$ mm. Instead, to generate $\mathbf{L}_*$, the human applied a knock and a poke of maximal depths $19.60$ mm and $17.05$ mm, respectively. Therefore, the solution proposed by \textbf{Algorithm 1} is coherent with the task. It should also be noted that in all five experiments, the distance diminishes at each step (see blue curves). 

Finally, we repeated the five experiments using the na\"ive $\mathbf{f}_\mathrm{D}$ function to predict action effect. The results were much less satisfying in terms of both final image quality and distance error. In none of the experiments, \textbf{Algorithm 1} managed to retrieve the type, position and number of actions which the human applied to obtain $\mathbf{L}_*$. This result leads us to believe that the molding problem cannot be solved by such a simple approach, and that particularly in complex scenarios such as the ones studied here, deep learning can provide a much better solution to the action prediction problem.


\section{CONCLUSIONS AND FUTURE WORK}

Although our results are encouraging, they are preliminary -- to say the least. The path towards robotic molding is still long and paved with obstacles. 

First, the current hardware substantially limits the range of robot operation. Even an extraordinary robot like the Shadow Dexterous Hand, used here, is designed for conventional rigid object manipulation, and can by no means reproduce the versatility and adaptability of the human hand. It is rigid, has non-reproducible behavior (due to cable actuation), is weak (e.g., not strong enough to penetrate compressed flour) and tends to break easily. This is also why if was difficult to actuate the fingers, and in two out of three robot actions (all except \textit{grasp}) we maintained a constant hand posture while in contact with the material. We had to dedicate most of our time to find the appropriate material and actions, in order to record relevant data without breaking the robot! Also in terms of perception, the gap between technology and nature is critical. 

Present-day force sensing is far from providing good accuracy at the low ranges required by our application, and it could not be used to close the actions' feedback loop (e.g., to decide when or how to grasp the flour). Our Shadow Hand is equipped with five (one per fingertip) SynTouch LLC Biotac tactile sensors. Yet, these sensors are uncalibrated and only provide information if the fingertip touches the environment, which is rarely the case when molding. Furthermore, to shield the robot mechanics and the Biotac themselves from flour, the hand wears a glove, which biases the Biotac readings. The ideal sensor for molding should cover palm and fingers, and be dirt and waterproof. For all these reasons, although we acknowledge the importance of force sensing, we have only focused on visual feedback.

In the future, solutions to these actuation and perception issues may come from soft~\cite{softGrip:18} and underactuated~\cite{softHand:15} robots or from dense tactile skins~\cite{skins:15}. Another worthy aspect is that humans generally rely on both hands for molding; yet, this introduces other difficulties, beyond the scope of this paper

A second limitation comes from the absence of a physical model, which led us to use deep machine learning. We could improve our deep learning methods by adding data from multiple humans (not just one, like here), relying on better computational resources, or automating image annotation as in~\cite{cherubini:ijrr:2020}. Yet, our results show that deep learning outperforms image difference in generalizing across actions, materials and geometry (e.g., irregular surfaces). One of the limitations of the neural networks developed here is that they are not scalable in terms of image size, and therefore lack robustness with respect to variations of the action depth coordinate. Due to their synthetic nature, the predicted images may contain some artifacts and discontinuities. Yet, despite these artifacts, our framework could compute the correct sequence of actions.

A third issue, outlined above, is that our framework relies on too many assumptions, which should gradually be relaxed, to make it applicable in real scenarios. For instance, we cannot afford variations in the relative pose between the camera and workspace, unless a rigorous calibration phase is carried out systematically, to guarantee consistency between the robot operational space and the camera frame. Besides, our action space is limited to predetermined actions. This raises the question of whether our method could provide a reasonable prediction for unknown actions. Furthermore, the action sequence search problem requires the actions to be applied at a limited and known set of positions, making it easily intractable in ambitious scenarios. To generalize to continuous action positions, one could use random sampling, Monte Carlo tree search or particle swarm optimization, to select the best sequence of actions/positions. These approaches are more general than greedy ones, since they explore a larger space of possible actions sequences. Yet, they can be computationally expensive, especially if the search space is large and the prediction model complex (as is the case here). 


Despite the above issues, there is still room for hope. For instance, we believe that our approach -- transforming the prediction problem and the distance metric from 3D point cloud to depth image space -- is a promising avenue of research for soft plastic object manipulation. First, it can be enriched by 2D image processing techniques (e.g., convolution, mutual information). Second, in contrast with state-of-art methods (such as Iterative Closest Point), it guarantees the fast computation and low memory usage required for robot molding -- at both learning and planning steps. Third, it could encompass a coarse mechanical model of the object. Generally, in the authors' view, the future of soft plastic object manipulation lies in the successful integration of a coarse physical model, refined with data-based techniques.
%
%

\section*{Acknowledgments}
The authors would like to thank Benjamin Navarro for his help on the robot programming and Andr\'e Crosnier, Fabrice Caussidery, Phlippe Fraisse, Herv\'e Louche, Serge Mora, Jean-Michel Muracciole for their suggestions on the material to be used in the experiments. 

\bibliography{biblio}
\bibliographystyle{IEEEtran}
 \vspace{-0.8cm}
 \begin{IEEEbiography}
 	[{\includegraphics[width=1in,height=1.25in,clip,keepaspectratio]{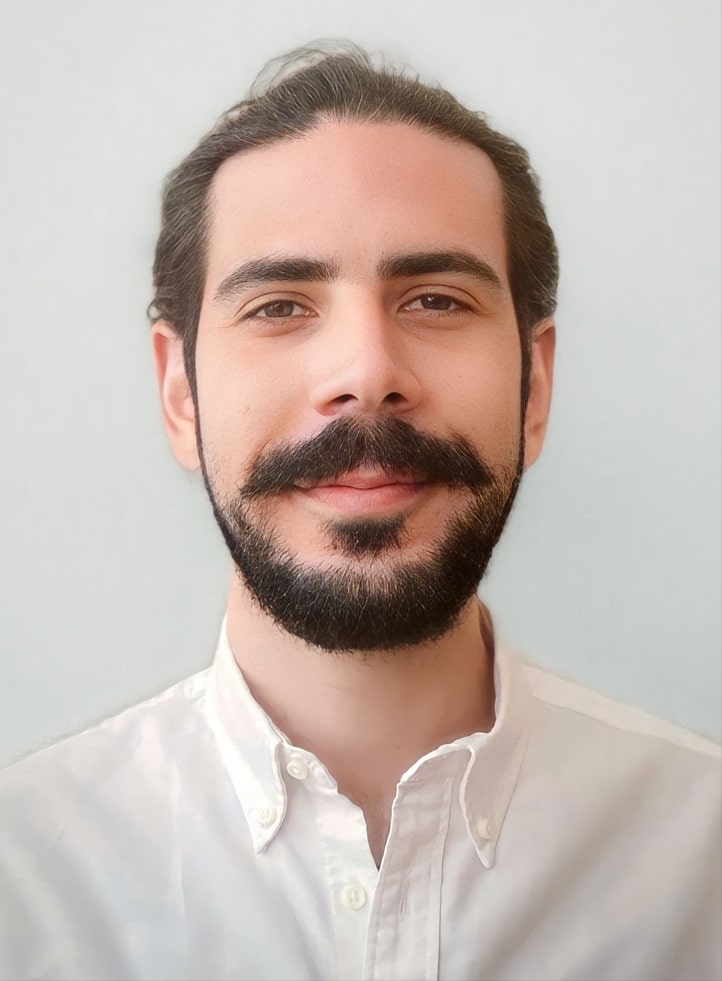}}]
 	{Ege Gursoy} received the M.Sc. in Robotics in 2020 from Universit\'e de Montpellier, France. From 2020 to 2021 he worked at UM as a research engineer in the IDH (Interactive Digital Humans) research group. Currently, he is a PhD candidate in a collaboration between Universit\'e de Montpellier and Monash University, Australia. 	
 \end{IEEEbiography}
 \vspace{-0.6cm}
 \begin{IEEEbiography}
    [{\includegraphics[width=1in,height=1.25in,clip,keepaspectratio]{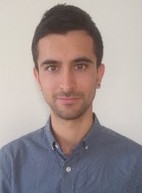}}]
 	{Sonny Tarbouriech} received the M.Sc. in Electrical and Computer Science Engineering in 2014 from the \'Ecole Polytechnique de Montpellier and a second M.Sc. in Robotics in 2016 from the University of Sherbrooke, Canada in 2016. From 2016 to 2019, he worked at Tecnalia and was a PhD student at Universit\'e de Montpellier, under the supervision of Prof. Philippe Fraisse. Since graduation, Sonny has been working at UM as a research engineer in the IDH group. 	
 \end{IEEEbiography}
 \vspace{-0.9cm}

  \begin{IEEEbiography}	[{\includegraphics[width=1in,height=1.25in,clip,keepaspectratio]{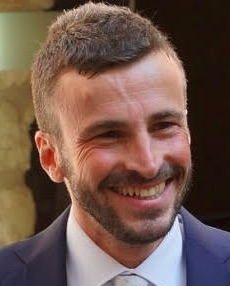}}]
 	{Andrea Cherubini} received the M.Sc. in Mechanical Engineering in 2001 from the University of Rome La Sapienza and a second M.Sc. in Control Systems in 2003 from the University of Sheffield, U.K. In 2008, he obtained the Ph.D. in Control Systems from La Sapienza, and started a three year postdoc at INRIA Rennes. Since 2011, he is at Universit\'e de Montpellier, first as Associate Professor, and then as Full Professor. At UM, he is in charge of the Robotics Master, and leads the IDH (Interactive Digital Humans) research group.
 \end{IEEEbiography}

\end{document}